 \providecommand\BibTeX{{%
  Bib\TeX}}}
  \providecommand\BibTeX{{%
    \normalfont B\kern-0.5em{\scshape i\kern-0.25em b}\kern-0.8em\TeX}}}
\newcommand{\nop}[1]{}
\newcommand{\model}{\mbox{\sc CF-GODE}\xspace}
\newcommand{\ode}{Treatment-Induced GraphODE\xspace}
\newcommand{\odelower}{treatment-induced GraphODE\xspace}
\newcommand{\ind}{\perp\!\!\!\!\perp} 
\newcommand{\CI}{\mathrel{\perp\mspace{-10mu}\perp}}
\newtheorem{theorem}{Theorem}
\newtheorem{theorem_copy}{Theorem}
\newtheorem{proposition}{Proposition}
\newtheorem{lemma}{Lemma}
\definecolor{red_color}{RGB}{255,0,0}
\newcommand{\redtext}[1]{\textcolor{red_color}{#1}}
\definecolor{yellow_color}{RGB}{255,199,44}
\definecolor{blue_color}{RGB}{39,116,174}
\newcommand{\bluetext}[1]{\textcolor{blue_color}{#1}}
\definecolor{dark_red}{RGB}{153, 31, 41}
\newcommand{\drtext}[1]{\textcolor{dark_red}{#1}}
\definecolor{green_color}{RGB}{91,175,52}
\definecolor{brown_color}{RGB}{205,90,161}
\definecolor{lg_color}{RGB}{63,147,139}
\definecolor{purple_color}{RGB}{165,86,57}
\definecolor{com_color}{RGB}{0,0,139}
\newcommand{\com}[1]{\textcolor{com_color}{#1}}
\begin{document}

\title{CF-GODE: Continuous-Time Causal Inference for\\ Multi-Agent Dynamical Systems}

\author{Song Jiang}
\email{songjiang@cs.ucla.edu}
\affiliation{%
  \institution{University of California, Los Angeles}
  \city{Los Angeles}
  \country{CA}
}

\author{Zijie Huang}
\email{zijiehuang@cs.ucla.edu}
\affiliation{%
  \institution{University of California, Los Angeles}
  \city{Los Angeles}
  \country{CA}
}

\author{Xiao Luo}
\email{xiaoluo@cs.ucla.edu}
\affiliation{%
  \institution{University of California, Los Angeles}
  \city{Los Angeles}
  \country{CA}
}

\author{Yizhou Sun}
\email{yzsun@cs.ucla.edu}
\affiliation{%
  \institution{University of California, Los Angeles}
  \city{Los Angeles}
  \country{CA}
}

\renewcommand{\shortauthors}{Song Jiang, Zijie Huang, Xiao Luo and Yizhou Sun.}


\begin{abstract}
Multi-agent dynamical systems refer to scenarios where multiple units (aka agents) interact with each other and evolve collectively over time. For instance, people's health conditions are mutually influenced. Receiving vaccinations not only strengthens the long-term health status of one unit but also provides protection for those in their immediate surroundings. To make informed decisions in multi-agent dynamical systems, such as determining the optimal vaccine distribution plan, it is essential for decision-makers to \emph{estimate the continuous-time counterfactual outcomes}. However, existing studies of causal inference over time rely on the assumption that units are mutually independent, which is not valid for multi-agent dynamical systems. In this paper, we aim to bridge this gap and study how to estimate counterfactual outcomes in multi-agent dynamical systems. Causal inference in a multi-agent dynamical system has unique challenges: 1) Confounders are time-varying and are present in both individual unit covariates and those of other units; 2) Units are affected by not only their own but also others' treatments; 3) The treatments are naturally dynamic, such as receiving vaccines and boosters in a seasonal manner. To this end, we model a multi-agent dynamical system as a graph and propose a novel model called \model (\textbf{\underline{C}}ounter\textbf{\underline{F}}actual \textbf{\underline{G}}raph \textbf{\underline{O}}rdinary \textbf{\underline{D}}ifferential \textbf{\underline{E}}quations). \model is a causal model that estimates continuous-time counterfactual outcomes in the presence of inter-dependencies between units. To facilitate continuous-time estimation, we propose \ode, a novel ordinary differential equation based on graph neural networks (GNNs), which can incorporate dynamical treatments as additional inputs to predict potential outcomes over time. To remove confounding bias, we propose two domain adversarial learning based objectives that learn balanced continuous representation trajectories, which are not predictive of treatments and interference.
We further provide theoretical justification to prove their effectiveness. Experiments on two semi-synthetic datasets confirm that \model outperforms baselines on counterfactual estimation. We also provide extensive analyses to understand how our model works.

\end{abstract}

\begin{CCSXML}
<ccs2012>
   <concept>
       <concept_id>10002950.10003648.10003649.10003655</concept_id>
       <concept_desc>Mathematics of computing~Causal networks</concept_desc>
       <concept_significance>500</concept_significance>
       </concept>
 </ccs2012>
\end{CCSXML}

\ccsdesc[500]{Mathematics of computing~Causal networks}
\keywords{Causal Inference, Multi-Agent Dynamical System, GraphODE}

\maketitle

\section{Introduction}
Estimating counterfactual outcomes \emph{over time} is critical to gaining causal understanding for many useful practical applications, such as how to distribute the limited vaccines in the early days to maximize protection over time~\cite{medlock2009optimizing}, or how to design proper scheduling of medical treatments to optimize the patient recovery process~\cite{BicaAJS20}. Randomized controlled trials (RCTs) are the gold standard for causal inference, but they can be cost-prohibitive and ethically challenging, particularly when considering the dynamical settings described above. Therefore, estimating counterfactual outcomes from observational data is the key approach to answering causal questions in real-world scenarios. Existing research on observational causal inference over time has begun by utilizing basic linear regression~\cite{robins2000marginal} and Gaussian processes~\cite{xu2016bayesian} to capture the time-dependencies. Subsequently, advancements have been made by incorporating more advanced deep learning models such as recurrent neural networks (RNNs) ~\cite{BicaAJS20,fujii2022estimating} and Transformers ~\cite{MelnychukFF22}.

Despite the progress, all aforementioned studies have relied on the assumption that units (e.g., people in the vaccine example) are independent of each other, i.e., each unit is solely influenced by its own treatment but not by others. In many realistic scenarios, however, this assumption is not valid. For instance, a person's vaccination not only protects themselves but also those close to them. This type of setting is referred to as a \emph{multi-agent dynamical system}~\cite{gazi2007coordination}, where units (also known as agents) interact with each other and evolve collectively over time. Many practical problems can be expressed as multi-agent dynamical systems, such as the long-term effects of vaccination where people mutually influence~\cite{halloran2012causal}, brain network signals in which the regions of interest (ROI) in a brain are associated~\cite{yu2022learning,cui2022braingb}, and molecular systems movements where the atoms are interconnected ~\cite{durrant2011molecular}. Prior approaches for causal inference over time are not applicable to multi-agent dynamical systems since they are not capable of handling the interconnections between units. In this paper, we propose to study this novel problem \emph{counterfactual estimation in multi-agent dynamical systems}, which has received limited attention in the literature.  

\begin{figure}[t]
 \centering
 \includegraphics[width=1\columnwidth]{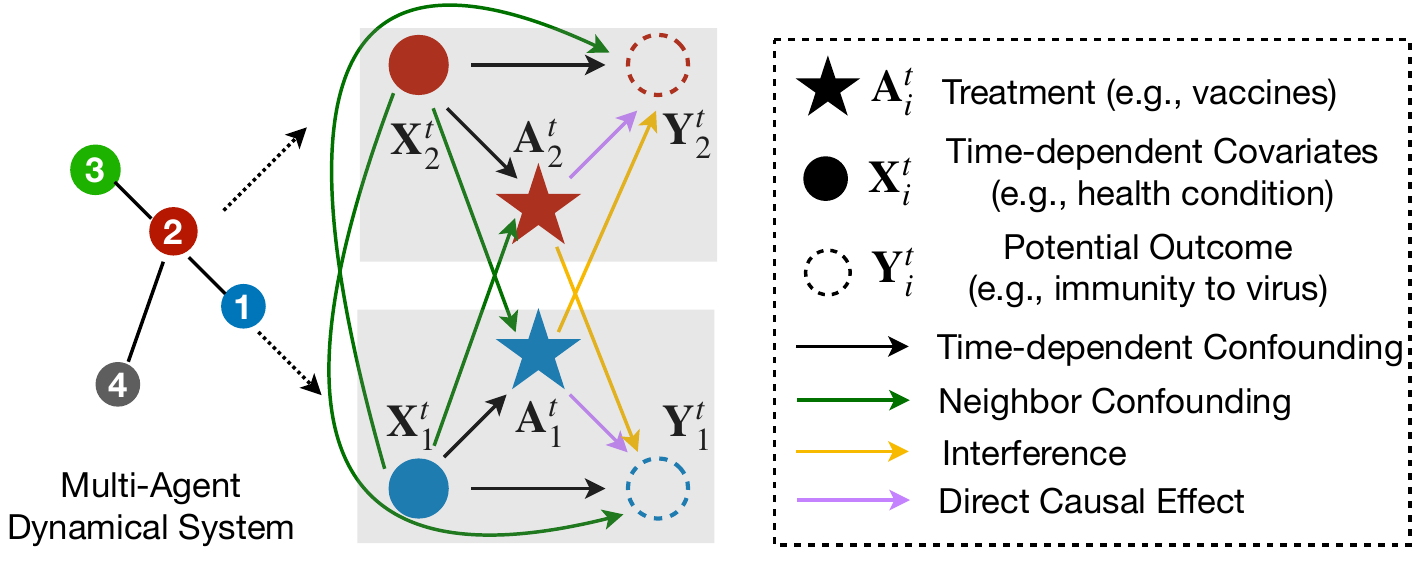}
  \caption{Causal graph at time $t$ in a multi-agent dynamical system. The causal variables are represented by shapes, while their relationships are distinguished by colors.  }\label{fig:intro}
   \vspace{-0.2cm}
\end{figure}

The dynamic and interrelated nature of multi-agent dynamical systems poses unique and nontrivial challenges to causal inference. We illustrate them along with Fig.~\ref{fig:intro}. 1) \textbf{Multi-source confounders}. Confounders are variables that have an impact on both treatments and outcomes, leading to spurious correlations between them. Therefore, in observational data, the treatments are not balanced among units with different confounders values, resulting in biased counterfactual outcomes estimation. For example, old people are more likely to receive vaccines, but also face a higher risk of virus infection. If we train a standard supervised model using such imbalanced data, it may wrongly predict that vaccines may increase the infection risk for young people. In multi-agent dynamical systems, the confounders are multi-source, including \emph{time-dependent confounders} and \emph{neighbor confounders}. Time-dependent confounders refer to the fact that the confounders typically evolve over time and thus their impact on treatments and outcomes also changes dynamically~\cite{platt2009time, BicaAJS20}. For instance, people's health conditions change over time, affecting their likelihood of getting vaccinated and future health status. Neighbor confounders mean that a unit's treatment and outcome could also be confounded by the covariates of its near units (neighbors)~\cite{forastiere2021identification,arbour2016inferring}, For example, if family members are in poor health, a unit may be more likely to receive a vaccine. Compared to the independent setting, neighbor confounders are additional confounding factors in multi-agent dynamical systems. 2) \textbf{Imbalance of interference}. As discussed in the previous example that vaccines protect not only a unit but also those in close proximity, the outcome of a unit can be influenced by others' treatments in multi-agent dynamical systems. In causal language, this phenomenon is referred to as \emph{interference}. Similar to the treatments, interference is affected by the covariates and thus is not balanced across the units in observational data~\cite{forastiere2021identification}. For instance, highly educated units are more likely to receive vaccines and typically have more highly educated friends. Therefore, they receive stronger protection through higher vaccination rates among their social networks. Such imbalanced interference causes additional bias in the estimation of counterfactual outcomes. 3) \textbf{Continuous dynamics}. In realistic applications, a multi-agent dynamical system is continuous in nature~\cite{porter2014dynamical}. However, most existing causal models are discrete, making them inappropriate for multi-agent dynamical systems. Modeling continuous-time observations (such as covariates and outcomes) and continuously estimating counterfactual outcomes over time remains an open challenge.

In this paper, we address the above challenges and study how to estimate continuous-time counterfactual outcomes, in presence of multi-source confounders and interference, in multi-agent dynamical systems. This is a novel, yet challenging and under-explored problem with valuable real-world applications. 

To this end, we model a multi-agent dynamical system as a \emph{graph}, where nodes represent units and edges capture their interactions. Inspired by recent achievements in graph ordinary differential equations (GraphODE)~\cite{huang2020learning}, we propose \model, a novel \emph{causal} model that estimates continuous-time \textbf{\underline{C}}ounter\textbf{\underline{F}}actual outcomes based on \textbf{\underline{G}}raph \textbf{\underline{O}}rdinary \textbf{\underline{D}}ifferential \textbf{\underline{E}}quations in multi-agent dynamical systems. Specifically, we use GraphODE as a backbone to model the continuous trajectory of each unit. However, in this case, traditional GraphODE can only model the pure dynamics of potential outcomes~\cite{huang2020learning} and lacks the ability to incorporate additional inputs such as treatments, making it inappropriate for causal inference. To address this issue, in \model, we propose \emph{\ode}, a new GraphODE model capable of handling treatments when predicting the future trajectory of potential outcomes. \ode uses graph neural networks (GNNs)~\cite{KipfW17} to formulate its differential equations, which can effectively capture the mutual dependencies between units including neighbor confounders and interference. This advantage makes it a natural fit for counterfactual estimation in multi-agent dynamical systems. Then a latent representation is learned for each unit from its observations as the solution to \ode, which represents the continuous trajectory driven by treatments. The core of ensuring \model is a causal model is to deal with the aforementioned estimation bias caused by imbalanced treatments and interference in the observational data. We solve this issue via domain adversarial learning~\cite{ganin2016domain,WangHK20}, in which we treat the values of treatments (and interference) as domains and ensure the latent representation trajectories are invariant to them. We provide theoretical justification to demonstrate that the domain-adversarial balancing objective functions proposed in \model can effectively achieve the balancing goal, thereby removing bias in counterfactual estimation and ensuring that \model is causal.

We summarize our major \textbf{contributions} as follows: 1) We study how to estimate counterfactual outcomes in multi-agent dynamical systems, which is a novel yet challenging problem with useful practical implications. 2) We propose \model, a novel causal model for causal inference multi-agent dynamical systems based on GraphODE and domain-adversarial learning. 3) We provide theoretical analysis to show that \model is able to handle the imbalanced treatments and interference, ensuring unbiased counterfactual estimation. 4)We conduct extensive experiments to evaluate \model's performance on counterfactual outcomes estimation in multi-agent dynamical systems.

\vspace{-10pt}
\section{Related Work}

\subsection{Causal Inference Over Time}
The central challenge in estimating counterfactual outcomes in longitudinal settings is to remove the confounding bias from time-dependent covariates~\cite{BicaAJS20,platt2009time}. The core solution to this in existing works is to cut off the association between covariates and the observed treatment assignments over time. To achieve this goal, statistical tools are widely used in traditional approaches. For example, marginal structural models (MSMs)~\cite{robins2000marginal} use inverse probability of treatment weighting (IPTW)~\cite{rosenbaum1983central,rosenbaum1987model} to balance the distribution of covariates over time between unit groups that are assigned to different treatments. By doing so, treatment assignment can no longer be predicted from the balanced covariates, thus breaking their correlation. A later work~\cite{lim2018forecasting} further enhances MSMs by using recurrent neural networks (RNNs) to learn the inverse probability of treatment weights (IPTWs), which is more capable of modeling sequential data. However, IPTW based tools can result in high variances in practice~\cite{BicaAJS20}. To overcome this limitation, recent studies~\cite{BicaAJS20, MelnychukFF22} extend the representation learning based balancing approaches from static settings~\cite{johansson2016learning,yoon2018ganite,shalit2017estimating,yao2018representation} to dynamic settings. Specifically, counterfactual recurrent network (CRN)~\cite{BicaAJS20} uses RNNs to encode the time-varying covariates into latent embeddings over time, which are simultaneously optimized by two objectives: potential outcomes prediction and longitudinal distribution balancing w.r.t. treatment assignments. The learned balanced embeddings are not predictive of treatments, thus ensuring unbiased estimates of the potential outcomes. Since RNNs are less powerful in capturing long-range dependencies, ~\cite{MelnychukFF22} improves CRN by using Transformer~\cite{vaswani2017attention} that preserves long-range dependencies between time-dependent confounders. Despite the progress, all the above models can only predict counterfactual outcomes in discrete timestamps. However, practical longitudinal sequences are continuous in nature.

Our work is most related to~\cite{gwak2020neural,bellot2021policy,SeedatIBQS22,de2022predicting}, which estimate counterfactual outcomes in continuous dynamic settings using neural ordinary differential equations (ODEs)~\cite{RubanovaCD19,chen2018neural} or neural controlled differential equations (CDEs)~\cite{kidger2020neural}. Specifically, ~\cite{SeedatIBQS22} infers continuous latent trajectories to represent the movement of potential outcomes and balance the distribution of this latent representation between treated and control groups via adversarial learning. However, ~\cite{SeedatIBQS22} (and all aforementioned models) assume that units are mutually independent, which is usually not valid in many practical scenarios where the units affect each other, e.g., getting vaccinated provides long-term protection not only for oneself but also for their close ones. In contrast, our model is designed to estimate counterfactual outcomes in longitudinal settings where units are interdependent, i.e., the multi-agent dynamical systems.

\subsection{Continuous Modeling With Neural Ordinary Differential Equations (ODEs)}
Many dynamical systems are continuous in nature, which can be typically modeled using first-order ordinary differential equations (ODEs)~\cite{porter2014dynamical}. ODEs describe a system's rate of change over time by a specific function, which is traditionally designed by domain experts~\cite{qian2021integrating}, and more recently parameterized by neural networks~\cite{RubanovaCD19,chen2018neural,massaroli2020dissecting}, as a closed-form ODE function may be unknown for some complex real-world systems. Given initial states, the solution of a NeuralODE can be easily computed using any ODE solver, such as the Runge-Kutta method~\cite{schober2019probabilistic}. In multi-agent dynamical systems, such as the spread of infectious disease among people, units often interact with one another, yet standard NeuralODEs do not explicitly model these interactions. Recent works have sought to address this limitation by representing the interactions among multiple units as graphs, and then utilizing graph neural networks (GNNs)~\cite{KipfW17,VelickovicCCRLB18} to parameterize the ODE function~\cite{huang2021coupled,huang2020learning,ZangW20a,poli2019graph}. When predicting the dynamics of each unit, these GraphODE models not only take into account the unit's own latent state but also aggregate the latent states of its connected units along the interaction graph, to effectively capture the mutual influence between them. However, GraphODE models are standard statistical methods and therefore lack the capability for causal inference. Instead, our model aims to address the unique challenges present in multi-agent dynamical systems, i.e., time-dependent confounders and network interference, in order to make counterfactual predictions.

\section{Problem Setup}

\subsection{Problem Formulation}\label{sec:problem}

We study how to estimate counterfactual outcomes in the context of multi-agent dynamical systems, where the units engage in mutual interactions and evolve simultaneously over time. Throughout this paper, we use boldface uppercase letters to denote matrices or vectors, boldface uppercase letters with subscripts to signify elements of matrices or vectors, regular lowercase letters to represent values of variables, and calligraphic uppercase letters to indicate sets. We summarize all notations used in this paper in Appendix.~\ref{sec:notations}.

Formally, a multi-agent dynamical system can be represented by a dynamical graph $\mathcal{G}^t=(\mathcal{V},\mathcal{E}^t)$, where $\mathcal{V} = \{v_1, v_2, ..., v_N\}$ is the set of $N$ units (nodes) and $\mathcal{E}^t$ denotes the edge set at time $t$. An edge in $\mathcal{E}^t$ describes the intersection between the two units it connects at time $t$. In this paper, we present an early exploration of causal inference in multi-agent dynamical systems, and for the purpose of simplicity, we assume that the graph structure remains constant over time, i.e., $\mathcal{G}^t = \mathcal{G}$. Each unit is associated with time-varying variables, which are the causal quantities in our case. We introduce them together with the causal framework in the following.

We follow the longitudinal potential outcomes framework~\cite{robins2009estimation,rubin1978bayesian} to formalize the counterfactual outcome estimation as in~\cite{BicaAJS20, SeedatIBQS22}. The observational data $\left(\left( \mathbf{X}^t, \mathbf{A}^t, \mathbf{Y}^t \right) \cup \mathbf{V}\right)$ in a multi-agent dynamical system contains time-dependent covariates $\mathbf{X}^t$ (e.g., health condition), dynamical
treatments $\mathbf{A}^t$ (e.g., vaccine allocation), and time-varying outcomes $\mathbf{Y}^t$ (e.g., immunity to infectious disease). It is worth noting that $\mathbf{Y}^t$ is essentially a part of $\mathbf{X}^t$. $\mathbf{V}$ denotes the static covariates of units such as ethnicity. Let the historical records of the multi-agent dynamical system up to time $t$ be represented by $\mathcal{H}^t = \{\mathbf{\Bar{X}}^t,\mathbf{\Bar{A}}^t,\mathbf{\Bar{Y}}^t, \mathbf{V}\}$, where $\mathbf{\Bar{X}}^t, \mathbf{\Bar{A}}^t,\mathbf{\Bar{Y}}^t$ are all the $\mathbf{X}^{t^-}, \mathbf{A}^{t^-}, \mathbf{Y}^{t^-}$ until $t$ $(t^-\leq t)$, respectively. In causal inference, we are focused on understanding the potential outcomes $\mathbf{Y}^{t^+}(\mathbf{A}^{t^+}=a)$\footnote{The potential outcome can also be formalized using \emph{do} operation~\cite{pearl2009causality}.} that may occur in the future $(t^+>t)$ under a specific treatment $a$, which explains the impact of the treatment assignment on the dynamics of the system. Note that $a$ is a treatment trajectory that includes all treatments in the future time. Our goal is to estimate the future potential outcomes sequence driven by treatments in a multi-agent dynamical system, which is formalized as:
\begin{align}
\mathbb{E}\left(\mathbf{Y}^{t^+}(\mathbf{A}^{t^+}=a)\mid \mathcal{H}^t, \mathcal{G}\right).\label{eqn:formalization}
\end{align}
\subsection{Causal Identification}
The potential outcomes represented by $\mathbf{Y}^{t^+}(\mathbf{A}^{t^+}=a)$ are a causal quantity. To make it identifiable from observational data, we must adhere to the following necessary assumptions.

\textbf{Assumption 1: Positivity (Overlap)}. The future treatment trajectory is probabilistic regardless of the historical observation, i.e., $0 < P(\mathbf{A}^{t^+}=a\mid \mathcal{H}^t) < 1, \forall \mathcal{H}^t$.

\textbf{Assumption 2: Consistency}. Under the same treatment trajectory $a$, the potential outcome is equal to the observed outcomes, i.e., $\mathbf{Y^{t^+}}(\mathbf{A}^{t^+}=a) = Y^{t^+}$.

The above two assumptions are standard for longitudinal counterfactual estimation. To identify the potential outcomes, it is also necessary to assume that there are no unobserved confounders, i.e., the strong ignorability assumption. However, the typical sequential strong ignorability assumption~\cite{BicaAJS20,SeedatIBQS22,MelnychukFF22,lim2018forecasting} is not appropriate for multi-agent dynamical systems, because the graph structure $\mathcal{G}$ introduces extra graph confounders and interference. A plausible strong ignorability assumption for graphs is first introduced by~\cite{forastiere2021identification} and later validated in studies such as~\cite{ma2021causal, ma2022learning, jiang2022estimating}. However, the assumption made in these works is limited to static settings. To address this, we extend it to longitudinal settings and adapt it to be applicable to multi-agent dynamical systems in the following.

\begin{figure}[t]
 \centering
 \includegraphics[width=1\columnwidth]{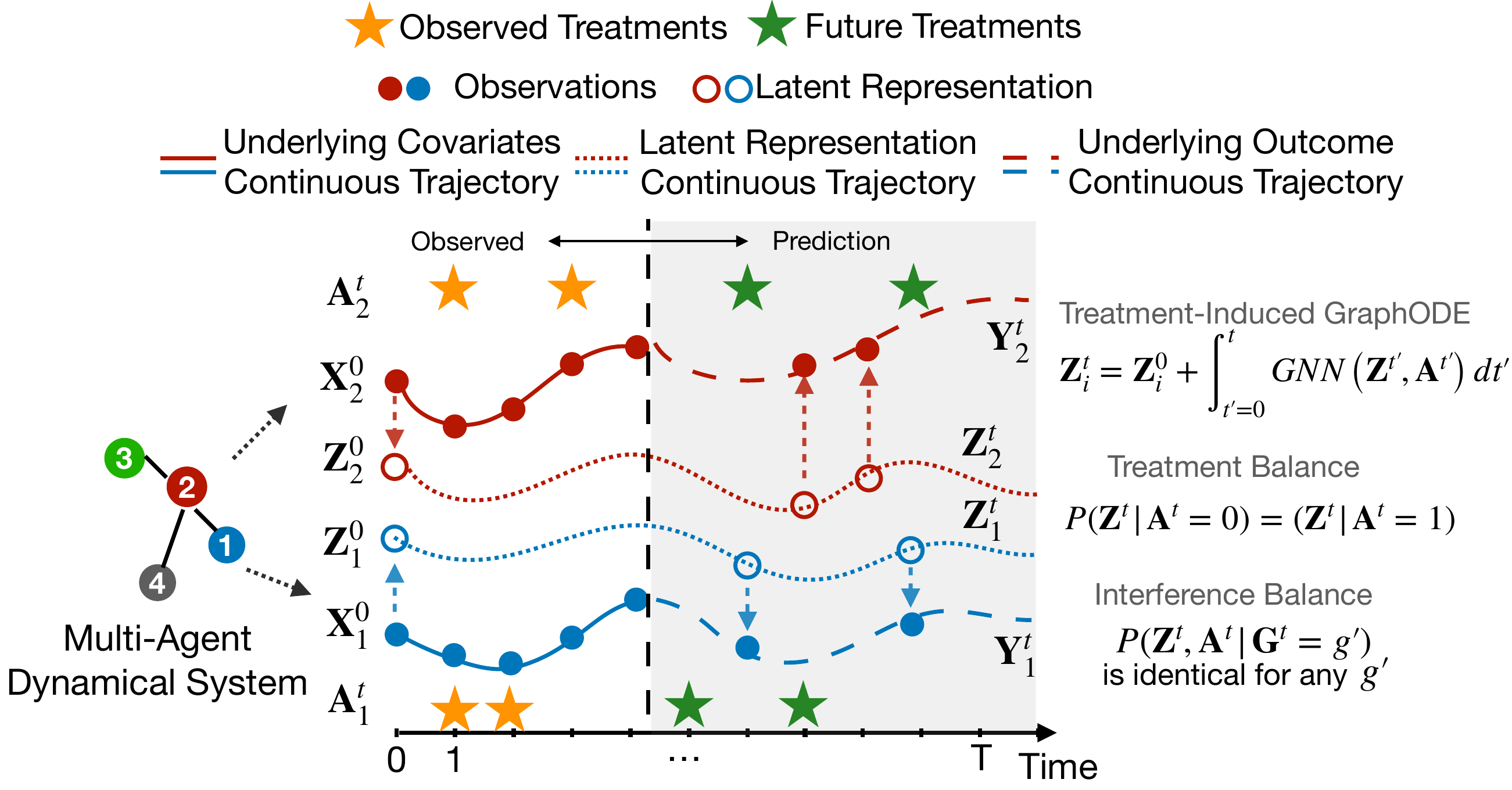}
  \caption{Overview of \model. The initial latent representation $\mathbf{\mathbf{Z}^0}$ is first learned from initial observations. Then the continuous latent representation trajectory $\mathbf{\mathbf{Z}^t}$ is learned as the solution to \odelower, which is able to handle treatments as additional inputs. The graph neural network (GNN) based ODE function naturally models the mutual dependencies. The future potential outcomes can be decoded from $\mathbf{\mathbf{Z}^t}$ at any given time. To remove confounding bias, $\mathbf{\mathbf{Z}^t}$ is balanced with respect to 1) treatments, 2) interference when combined with corresponding treatments.}\label{fig:frame}
  \vspace{-15pt}
\end{figure}

We first introduce a summary function, denoted as $g(\cdot)$, that captures the interference effects caused by the treatments of a node's neighboring units in the graph as in~\cite{forastiere2021identification}. Formally, $\mathbf{G}_i^t = g(\mathbf{A}^t_{\mathcal{N}_i},\mathbf{A}^t_{\mathcal{N}_{-i}})$, where $\mathbf{A}^t_{\mathcal{N}_i}$ denotes the treatments of node $i$'s immediate neighbors, and $\mathbf{A}^t_{\mathcal{N}_{-i}}$ is the treatments of all the remaining node that are not directly connected to node $i$. We refer to $\mathbf{G}_i^t$ as \emph{interference summary}. Here for simplicity, we adopt the assumption put forth in ~\cite{forastiere2021identification,arbour2016inferring,jiang2022estimating} that a node is only influenced by the treatments of its immediate neighbors, i.e., $g(\mathbf{A}^t_{\mathcal{N}_i},\mathbf{A}^t_{\mathcal{N}_{-i}}) = g(\mathbf{A}^t_{\mathcal{N}_i},\mathbf{A'}^t_{\mathcal{N}_{-i}}) = g(\mathbf{A}^t_{\mathcal{N}_i}), \forall \mathbf{A}^t_{\mathcal{N}_{-i}}, \mathbf{A'}^t_{\mathcal{N}_{-i}}$. $g(\cdot)$ can be instantiated using any aggregation functions or models. As in previous studies~~\cite{forastiere2021identification,ma2021causal,jiang2022estimating}, in this paper, we define $\mathbf{G}_i^t$ as the proportion of treated units in unit $i$'s neighbors, i.e., $\mathbf{G}_i^t \vcentcolon=\sum_{j\in \mathcal{N}_i}\frac{\mathbf{A}_i^t}{|N_i|}$.
With $\mathbf{G}_i^t$, we present the strong ignorability assumption for multi-agent dynamical systems in the following:

\textbf{Assumption 3: Strong Ignorability for Multi-Agent Dynamical Systems}\footnote{Note that similar to the strong ignorability assumptions in static or non-graph sequential settings, assumption 3 can not be verified only from data.}. Given the historical observations and the graph structure that describes the multi-agent dynamical system, the potential outcome trajectory is independent of the treatments and interference summary, i.e., $\mathbf{Y^{t^+}}(\mathbf{A}^{t^+}=a) \CI \mathbf{A}^{t^+}, \mathbf{G}^{t^+} \mid \mathcal{H}^t, \mathcal{G}. \forall a, t$.

With these three assumptions, the potential outcome trajectory Eq.~(\ref{eqn:formalization}) can be identifiable as:
\begin{align}
        &\mathbb{E}\left(\mathbf{Y}^{t^+}(\mathbf{A}^{t^+}=a)\mid \mathcal{H}^t, \mathcal{G}\right)\nonumber \\
        &= \mathbb{E}\left(\mathbf{Y}^{t^+}(\mathbf{A}^{t^+}=a)\mid \mathbf{A}^{t^+}, \mathbf{G}^{t^+}, \mathcal{H}^t, \mathcal{G}\right)\label{eqn:identify_ignore}\\
&= \mathbb{E}\left(\mathbf{Y}^{t^+}\mid \mathbf{A}^{t^+}, \mathbf{G}^{t^+}, \mathcal{H}^t, \mathcal{G}\right)\label{eqn:identify_consist}.
\end{align}
Eq.~(\ref{eqn:identify_ignore}) is true because of assumption 3, while Eq.~(\ref{eqn:identify_consist}) holds under the assumption 2.
The above causal identification enables us to estimate the potential outcomes in multi-agent dynamical systems using observational data. More specifically, we can train a machine learning model on observational data, which takes treatment trajectory $\mathbf{A}^{t^+}$, interference summary $\mathbf{G}^{t^+}$, historical observation $\mathcal{H}^t$ and graph $\mathcal{G}$ as inputs, and the observed (factual) outcome $\mathbf{Y}^{t^+}$ as targets, to predict the counterfactual outcomes given new treatment trajectories. Our proposed model \model is grounded in this and will be presented in detail in the subsequent section.
\section{Proposed Model: \model}

\subsection{Overview}
Our proposed \model is a causal model that predicts counterfactual outcomes in a multi-agent dynamical system by learning from observational data. We show an overview of our model in Fig.~\ref{fig:frame}. Compared to most existing causal models designed for standard sequential settings that consider discrete time intervals and independent units~\cite{BicaAJS20, MelnychukFF22}, multi-agent dynamical systems are more realistic and present two challenging properties: the dynamics are \emph{continuous} in nature, and units are \emph{influenced} by others. To address these, our proposed \model takes the advantage of recent breakthroughs in graph ordinary differential equations (GraphODE)~\cite{huang2020learning,huang2021coupled} and extends it to handle treatments and interference, enabling continuous estimation of counterfactual outcomes in multi-agent dynamical systems. We refer to our ODE model as \emph{\ode} (Sec.~\ref{sec:graph_ode}). The time-dependent confounders lead the distribution of covariates to be quite discrepant between units assigned to different treatments, resulting in high variances in counterfactual outcome estimation~\cite{johansson2016learning, shalit2017estimating,robins2000marginal}. This effect is further amplified by the imbalanced interference caused by the graph structure in multi-agent dynamical systems~\cite{forastiere2021identification,jiang2022estimating}. \model uses adversarial learning to alleviate this issue and guarantee unbiased estimates of counterfactual outcomes (Sec.~\ref{sec:adversarial_learning}).

\subsection{\ode}\label{sec:graph_ode}
To facilitate continuous-time counterfactual outcome estimation, we propose to learn a continuous latent trajectory $\mathbf{Z}_i^t$ for every node in multi-agent dynamical system that represents their movement. An ideal $\mathbf{Z}_i^t$ should possess two characteristics: 1) the ability to predict observed outcomes, and 2) not to be predictive of the received treatment or interference in observational data\footnote{The second characteristic is discussed in Sec.~\ref{sec:adversarial_learning}.}. We implement such a $\mathbf{Z}_i^t$ by a novel model called \ode, which empowers the recent GraphODE~\cite{huang2020learning,huang2021coupled} to deal with treatment and interference for counterfactual outcomes estimation. 

In a multi-agent dynamical system, the future outcomes of node $i$ might be affected by not only its own past movement and current treatment, but also the movements and interference from neighbors (e.g., a unit's health condition and vaccination status have a significant impact on how likely others are to be infected). We model this process and formalize \odelower as:
\begin{align}
    \mathbf{Z}_{i}^{t}=\mathbf{Z}_{i}^{0}+\int_{t'=0}^{t} \phi\left(\mathbf{Z}^{t'},\mathbf{A}^{t'}\right) dt'\label{eqn:ode}.
\end{align}
In Eq.~(\ref{eqn:ode}), $\mathbf{Z}^{t'}$ and $\mathbf{A}^{t'}$ denote the latent trajectory representations and treatments of all nodes in the multi-agent dynamical system, respectively. $\phi(\cdot)$ is the ODE function. To comprehensively capture the effects from node $i$ and its connected neighbors, we parameterize $\phi(\cdot)$ using graph neural networks~\cite{KipfW17} with self-loops. $\mathbf{Z}_{i}^{0}$ is the initial state and can be encoded from the initial observations as  $\mathbf{Z}_{i}^{0} = f(\mathbf{X}_{i}^{0},\mathbf{V}_i)$, where $f(\cdot)$ is an encoder parameterized by neural networks. With $\mathbf{Z}_{i}^{0}$, we can obtain the $\mathbf{Z}_{i}^{t}$, which is the solution to \odelower, by solving an ODE initial-value problem (IVP) in Eq.~(\ref{eqn:ode}), formalized as:
\begin{align}
     \mathbf{Z}_{i}^{0},\mathbf{Z}_{i}^{1}\cdots \mathbf{Z}_{i}^{T} = \text{ODESolve}\left(\phi,[\mathbf{Z}_1^0,\mathbf{Z}_2^0\cdots \mathbf{Z}_N^0],\left(t_0,t_1\cdots t_T\right)\right)\label{eqn:solver},
\end{align}
where $T$ is the number of timestamps for the evaluation of Eq.~(\ref{eqn:solver}). With the solution latent trajectory $\mathbf{Z}_{i}^{t}$, we can then use a decoder $d_\mathbf{Y}(\cdot)$ to transform it to the predicted outcome $\hat{\mathbf{Y}}_i^t = d_\mathbf{Y}(\mathbf{Z}_i^t)$. We also use neural networks to instantiate $d_\mathbf{Y}(\cdot)$. We compare the prediction $\hat{\mathbf{Y}}_i^t$ to ground-truths $\mathbf{Y}_i^t$ in all observed timestamps $\left(t_0,t_1\cdots t_T\right)$ using a mean square error as objective, which is formalized as:
\begin{align}
    L^{\langle Y \rangle} = \frac{1}{N}\frac{1}{T}\sum_i^N\sum_t^T\left(\hat{\mathbf{Y}}_i^t - \mathbf{Y}_i^t\right)^2.
\end{align}

\subsection{Balancing via Adversarial Learning}\label{sec:adversarial_learning}
In the observational data, the treatments applied to each unit $\mathbf{A}_i^t$ are affected by the time-dependent confounders present in the covariates (and thus in its latent representation trajectory $\mathbf{Z}_i^t$). Consequently, the distribution of latent representation trajectory is not balanced among units with different treatment assignments, i.e., $P(\mathbf{A}_i^t|\mathbf{Z}_i^t)$ is not uniform, leading to high variances in the counterfactual outcome estimation~\cite{johansson2016learning,shalit2017estimating}. In the context of multi-agent dynamical systems, this effect is further exacerbated by the presence of \emph{imbalanced interference} among units. This is because a unit's interference is influenced by its covariates (also the latent representation) and treatments in the observational data, i.e., $P(\mathbf{G}_i^t|\mathbf{Z}_i^t,\mathbf{A}_i^t)$ is not uniform~\cite{forastiere2021identification,jiang2022estimating,ma2021causal}. Here we give an intuitive example of the imbalanced interference: consider that a highly educated person is more likely to be surrounded by other highly educated friends, who believe in science and are more likely to be vaccinated, thereby providing stronger protection for this person against infectious diseases, i.e., higher interference.

A sufficient condition to remove the above bias is to ensure that the distribution of latent representation trajectories is invariant to treatments, and when combined with the corresponding treatments, is interference-invariant~\cite{shalit2017estimating,BicaAJS20,SeedatIBQS22,forastiere2021identification,jiang2022estimating}. This condition is formalized as  $P(\mathbf{Z}^t|\mathbf{A}^t=0) = (\mathbf{Z}^t|\mathbf{A}^t=1)$ for treatment balancing, and $P(\mathbf{Z}^t,\mathbf{A}^t|\mathbf{G}^t=g')$ is identical for any given value of $g'$ for interference balancing. The treatment $\mathbf{A}^t$ is binary and interference $\mathbf{G}^t$ is continuous as in~\cite{forastiere2021identification,jiang2022estimating}. Note that the aforementioned conditions are over the unit groups. This guarantees that the treatment cannot be inferred from the latent representation trajectory, and that the interference is not predictable when the treatment is combined with latent representation. We implement this balancing goal through domain adversarial learning~\cite{ganin2016domain}, in which the treatment is treated as binary domains and the interference is treated as continuous domains~\cite{WangHK20}. Specifically, we use the gradient reversal layer proposed in~\cite{ganin2016domain}, denoted as $r(\cdot)$, to adversarially optimize the latent representation trajectory at every observed time, making it agnostic towards the treatments and interference. 

\textbf{Treatment Balancing.} Formally, the predicted treatment is $\hat{\mathbf{A}}_i^t = d_\mathbf{A}\left(r(\mathbf{Z}_i^t)\right)$, where the $d_\mathbf{A}$ is a neural network that attempts to recover the treatment from latent representation. The gradient reversal layer $r(\cdot)$ does nothing in the forward pass, but reverses the gradients in the back-propagation. This way, a min-max game is created in which $d_\mathbf{A}$ aims to minimize the treatment prediction loss, while the latent representation learner in \odelower strives to maximize it, as formalized in the following:
\begin{align}\label{eqn:a_pred}
    L^{\langle A \rangle} = \underset{d_\mathbf{A}^j}{\text{min}}\ \underset{f,\phi}{\text{max}}\frac{1}{N}\frac{1}{T}\sum_i^N\sum_t^T\sum_{j\in\{0,1\}}\mathds{1}_{(\mathbf{A}_i^t=j)}-\log\left(d_\mathbf{A}^j\left(r(\mathbf{Z}_i^t)\right)\right),
\end{align}
where $d_\mathbf{A}^j$ represents the logits of $d_\mathbf{A}(\cdot)$ for predicting treatment $j$. We then provide a theoretical analysis to justify the capability of $L^{\langle A \rangle}$ to attain balanced representations in the following.
\begin{theorem}\label{theorem:t1}
 Let $j\in \{0,1\}$ be the binary treatment values, and let $N$ and $T$ denote the number of units and observed timestamp lengths, respectively. Let $P_j^t = P(\mathbf{Z}^t\mid \mathbf{A}^t=j)$, be the distribution of latent representation $\mathbf{Z}^t$ for the group of units with treatments $j$ at time $t$. Let $f$, $\phi$, $d_\mathbf{A}^j$ be the initial state encoder, the ODE function of \odelower, and logits of predicting treatment $j$. The necessary and sufficient condition for the min-max game in Eq.~(\ref{eqn:a_pred}) to be optimal is $P_0^t = P_1^t, \forall t\in\left(t_0,t_1\cdots t_T\right)$.
 \end{theorem}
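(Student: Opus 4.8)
The plan is to recognize Eq.~(\ref{eqn:a_pred}) as a standard domain-adversarial (GAN-style) minimax objective and to reduce it to the classical result that the optimal discriminator recovers the class-conditional posteriors, whose optimum over the representation distribution is attained exactly when the conditional distributions coincide. Concretely, I would first fix $f$ and $\phi$ (hence fix the induced law of $\mathbf{Z}^t$ at each time) and solve the inner minimization over $d_\mathbf{A}^j$. Writing $\pi_j^t = P(\mathbf{A}^t = j)$ for the marginal treatment proportions and $P_j^t = P(\mathbf{Z}^t \mid \mathbf{A}^t = j)$, the per-time, per-unit expected cross-entropy loss is $-\mathbb{E}\left[\sum_{j} \mathds{1}_{(\mathbf{A}^t = j)} \log d_\mathbf{A}^j(\mathbf{Z}^t)\right] = -\sum_j \pi_j^t \int P_j^t(z)\log d_\mathbf{A}^j(z)\,dz$, subject to $\sum_j d_\mathbf{A}^j(z) = 1$. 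Pointwise in $z$ this is minimized, by the Gibbs/log-sum inequality, at $d_\mathbf{A}^{j,\star}(z) = \pi_j^t P_j^t(z) / \sum_k \pi_k^t P_k^t(z)$, i.e. the true posterior $P(\mathbf{A}^t = j \mid \mathbf{Z}^t = z)$.

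Next I would substitute this optimal discriminator back into the objective to obtain the value of the inner minimum as a function of the representation law. After the substitution and a short rearrangement (adding and subtracting $\log \pi_j^t$ terms, which are constants independent of $f,\phi$), the objective becomes, up to an additive constant, a Jensen--Shannon-type divergence between $P_0^t$ and $P_1^t$ — explicitly $\mathrm{JSD}_{\pi^t}(P_0^t, P_1^t) = \pi_0^t \mathrm{KL}(P_0^t \,\|\, M^t) + \pi_1^t \mathrm{KL}(P_1^t \,\|\, M^t)$ with mixture $M^t = \pi_0^t P_0^t + \pi_1^t P_1^t$, plus the entropy $H(\pi^t)$ of the treatment marginal. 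Then the outer maximization over $f,\phi$ pushes this divergence to its maximum; but the overall saddle-point condition for \emph{optimality of the minimax game} (the point where the inner min and outer max agree, equivalently where the discriminator can do no better than chance given an adversarially optimal encoder) is that the generator drives the divergence to zero, i.e. $\mathrm{JSD}_{\pi^t}(P_0^t, P_1^t) = 0$. Since $\mathrm{JSD}$ (with $\pi_j^t \in (0,1)$, guaranteed by Assumption~1, Positivity) is nonnegative and vanishes if and only if its two arguments are equal almost everywhere, this is equivalent to $P_0^t = P_1^t$. Summing over $t \in (t_0,\dots,t_T)$ and over units (a finite nonnegative sum) shows the aggregate objective is optimal iff each summand is, giving $P_0^t = P_1^t$ for all $t$, which is the claim.

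I would then address the two directions explicitly: \textbf{($\Leftarrow$)} if $P_0^t = P_1^t$ for all $t$, the optimal discriminator is the constant $\pi_j^t$, the adversarial term contributes only $H(\pi^t)$ and cannot be further reduced or exploited, so $f,\phi$ can freely minimize $L^{\langle Y\rangle}$ while the minimax term sits at its saddle value — the game is optimal. \textbf{($\Rightarrow$)} conversely, if some $P_0^{t} \neq P_1^{t}$, the inner-optimal value strictly exceeds $H(\pi^{t})$, and there is slack the encoder is incentivized to remove (the max over $f,\phi$ is not attained at a point where the min value equals the chance level), so the configuration is not a saddle point of the game — contradiction. The main obstacle I anticipate is being precise about what ``optimal'' means for this minimax objective: one must argue at the level of the saddle point / Nash equilibrium rather than naively taking $\max$ then $\min$, since $\mathrm{JSD}$ is maximized by making the conditionals \emph{different}, not equal. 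The resolution, as in the original GAN analysis, is that the meaningful solution concept is the equilibrium where the discriminator is best-responding and the encoder's best response against that discriminator is to collapse the divergence; handling this carefully (and noting that in practice the encoder also carries the supervised term $L^{\langle Y\rangle}$, so it does not degenerate) is the delicate part. Secondary technical points — measurability of $d_\mathbf{A}$, existence of densities for $P_j^t$, and interchange of expectation and the pointwise optimization — are routine and can be handled under mild regularity assumptions, which I would state rather than belabor.
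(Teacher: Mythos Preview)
Your approach is the same as the paper's---compute the Bayes-optimal discriminator for fixed $f,\phi$, substitute it back, and reduce the outer problem to a divergence condition---but you have a sign slip that sends you down an unnecessary detour. After substituting $d_\mathbf{A}^{j,\star}$, the per-time value of the inner minimum is the conditional entropy $H(\mathbf{A}^t\mid\mathbf{Z}^t)=H(\pi^t)-\mathrm{JSD}_{\pi^t}(P_0^t,P_1^t)$, i.e.\ \emph{minus} the JSD up to the additive constant $H(\pi^t)$, not plus. Hence the outer $\max_{f,\phi}$ of this quantity directly \emph{minimizes} $\mathrm{JSD}_{\pi^t}$, and the optimum is attained precisely when $\mathrm{JSD}_{\pi^t}=0$, i.e.\ $P_0^t=P_1^t$. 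There is no tension with ``JSD is maximized by making the conditionals different,'' and no need to retreat to a separate Nash-equilibrium argument: the straightforward $\max$--$\min$ computation already lands on the right answer. The paper's proof does exactly this clean version (flipping $\text{argmax}$ to $\text{argmin}$ of $\sum_j \mathrm{KL}(P_j^t\,\|\,M^t)$ plus a constant and invoking nonnegativity of KL). Once you correct the sign, your ``main obstacle'' evaporates and your $(\Leftarrow)/(\Rightarrow)$ discussion goes through without further subtlety.
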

Theorem~\ref{theorem:t1} suggests that the condition to obtain global optimum of Eq.~(\ref{eqn:a_pred}) is $P(\mathbf{Z}^t|\mathbf{A}^t=0) = (\mathbf{Z}^t|\mathbf{A}^t=1)$. Therefore, by optimizing $L^{\langle A \rangle}$ in Eq.~(\ref{eqn:a_pred}), we can ensure the latent representation trajectory $\mathbf{Z}^t$ is balanced with respect to treatments. In other words, $\mathbf{Z}^t$ is not predictive of  $\mathbf{A}^t$.  We prove Theorem~\ref{theorem:t1} in Appendix.~\ref{sec:proof_t1}.

\textbf{Interference Balancing. } The interference $\mathbf{G}_i^t$ is continuous, we thus adapt the continuous domain adversarial learning~\cite{WangHK20} to achieve the interference balancing. Similar to the binary case, we consider the continuous interference as continuous domains, and use the gradient reversal layer $r(\cdot)$ to build a min-max game on interference prediction as follows:
\begin{align}\label{eqn:pred_g}
    L^{\langle G \rangle} &= \underset{d_\mathbf{G}}{\text{min}}\ \underset{f,\phi}{\text{max}}\frac{1}{N}\frac{1}{T}\sum_i^N\sum_t^T\left(d_\mathbf{G}\left(r([\mathbf{Z}_i^t,\mathbf{A}_i^t])\right)-\mathbf{G}_i^t\right)^2    
\end{align}
where $d_\mathbf{G}$ is the interference predictor which is parameterized by neural networks, and $[\cdot,\cdot]$ is the concatenation operation. In the following, we also theoretically demonstrate that $L^{\langle G \rangle}$ is able to achieve the interference balancing objective.

\begin{theorem}\label{theorem:t2}
Let $f$, $\phi$, $d_\mathbf{G}$ be the initial state encoder, the ODE function of \odelower, and the interference predictor. The necessary and sufficient condition for min-max game in Eq.~(\ref{eqn:pred_g}) to be optimal is $P(\mathbf{Z}^t,\mathbf{A}^t|\mathbf{G}^t=g')$ is identical for any $g'$.
\end{theorem}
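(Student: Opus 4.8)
The plan is to mirror the structure of the proof of Theorem~\ref{theorem:t1}, but now for a continuous-valued target. The min-max game in Eq.~(\ref{eqn:pred_g}) is a least-squares regression played adversarially, so for a fixed representation learner the inner minimization over $d_\mathbf{G}$ has a well-known Bayes-optimal solution: the conditional mean $d_\mathbf{G}^\star([\mathbf{Z}^t,\mathbf{A}^t]) = \mathbb{E}\left(\mathbf{G}^t \mid \mathbf{Z}^t,\mathbf{A}^t\right)$, and the resulting optimal inner value is $\sum_t \mathbb{E}\left(\mathrm{Var}\left(\mathbf{G}^t\mid \mathbf{Z}^t,\mathbf{A}^t\right)\right)$, i.e. the aggregated residual variance of $\mathbf{G}^t$ after conditioning on $(\mathbf{Z}^t,\mathbf{A}^t)$. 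First I would establish this claim by the standard bias-variance decomposition of the squared loss, expanding $(d_\mathbf{G}([\mathbf{Z}^t,\mathbf{A}^t]) - \mathbf{G}^t)^2$ around the conditional expectation and noting the cross term vanishes.

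Next I would analyze the outer maximization over $f,\phi$. Since the encoder and ODE function determine the joint law of $(\mathbf{Z}^t,\mathbf{A}^t,\mathbf{G}^t)$ (with $\mathbf{A}^t,\mathbf{G}^t$ fixed by the data and $\mathbf{Z}^t$ a deterministic push-forward through $f$ and the ODE solve), the representation learner seeks to make $\mathbf{G}^t$ as hard as possible to predict from $(\mathbf{Z}^t,\mathbf{A}^t)$, which means maximizing the residual variance $\mathbb{E}\left(\mathrm{Var}\left(\mathbf{G}^t\mid \mathbf{Z}^t,\mathbf{A}^t\right)\right)$ at every $t$. By the law of total variance, $\mathbb{E}\left(\mathrm{Var}\left(\mathbf{G}^t\mid \mathbf{Z}^t,\mathbf{A}^t\right)\right) = \mathrm{Var}\left(\mathbf{G}^t\right) - \mathrm{Var}\left(\mathbb{E}\left(\mathbf{G}^t\mid \mathbf{Z}^t,\mathbf{A}^t\right)\right)$, so this is maximized precisely when the second term vanishes, i.e. when $\mathbb{E}\left(\mathbf{G}^t\mid \mathbf{Z}^t,\mathbf{A}^t\right)$ is a.s. constant in $(\mathbf{Z}^t,\mathbf{A}^t)$, attaining the upper bound $\mathrm{Var}\left(\mathbf{G}^t\right)$. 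I would then show this mean-independence condition, holding for all $t$, is what the theorem calls the optimal configuration, and spell out the correspondence: $\mathbb{E}\left(\mathbf{G}^t\mid \mathbf{Z}^t,\mathbf{A}^t\right) = \mathbb{E}\left(\mathbf{G}^t\right)$ for all values of $(\mathbf{Z}^t,\mathbf{A}^t)$.

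The subtle point — and the step I expect to be the main obstacle — is bridging mean-independence and the full distributional statement in the theorem, namely that $P(\mathbf{Z}^t,\mathbf{A}^t\mid \mathbf{G}^t=g')$ is identical for all $g'$. Mean-independence of $\mathbf{G}^t$ given $(\mathbf{Z}^t,\mathbf{A}^t)$ is strictly weaker than independence in general, so a squared-error adversary alone cannot certify the stronger claim. I would resolve this the way continuous-domain adversarial balancing papers such as~\cite{WangHK20} do: either (i) appeal to the modeling convention that the balancing is stated at the level of first moments and reinterpret the ``identical distribution'' claim accordingly, or (ii) invoke the assumption that the relevant conditional families form an exponential-family/location model in which matching the conditional mean forces matching of the whole conditional law, or (iii) note that in the idealized adversarial setting one may replace the $\ell_2$ discriminator by a sufficiently rich family of test functions, so that mean-independence of every bounded transform $\psi(\mathbf{G}^t)$ is enforced, which by a standard argument (testing against indicators) yields $\mathbf{G}^t \CI (\mathbf{Z}^t,\mathbf{A}^t)$, equivalently the stated invariance of $P(\mathbf{Z}^t,\mathbf{A}^t\mid \mathbf{G}^t=g')$ across $g'$. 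I would present the forward direction (optimality $\Rightarrow$ balancing) via the law of total variance as above, and the converse direction (balancing $\Rightarrow$ optimality) by observing that under the invariance condition the best predictor is the unconditional mean, so no representation change can further increase the adversary's loss, certifying the saddle point.
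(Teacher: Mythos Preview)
Your proposal is correct and follows essentially the same route as the paper: the paper also first shows (its Proposition~\ref{pro:2}, citing Lemma~4.1 of~\cite{WangHK20}) that for fixed $f,\phi$ the Bayes-optimal $d_\mathbf{G}$ is the conditional mean $\mathbb{E}(\mathbf{G}^t\mid \mathbf{C}^t)$, then (its Lemma~\ref{lemma:1}, i.e.\ Theorem~4.1 of~\cite{WangHK20}, which is exactly your law-of-total-variance argument) that the outer maximization reduces to $\underset{f,\phi}{\text{argmax}}\ \mathbb{E}_{\mathbf{C}^t}\mathbb{V}(\mathbf{G}^t\mid \mathbf{C}^t)$, attained iff $\mathbb{E}(\mathbf{G}^t\mid \mathbf{C}^t)=\mathbb{E}(\mathbf{G}^t)$. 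Your caution about the gap between mean-independence and the full distributional statement is well-placed: at precisely that step the paper simply writes ``In other words, $\mathbf{G}^t\ind\mathbf{C}^t$'' and flips to $\mathbb{E}(\mathbf{C}^t\mid\mathbf{G}^t)=\mathbb{E}(\mathbf{C}^t)$ without further justification, so your discussion of options (i)--(iii) is already more careful than the original.
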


Theorem.~\ref{theorem:t2} indicates that if $\mathbb{E}([\mathbf{Z}^t,\mathbf{A}^t]\mid \mathbf{G}^t)$ is identical for any $\mathbf{G}^t=g'$, Eq.~(\ref{eqn:pred_g}) achieves optimum. Therefore, it is sufficient to balance the combination of representations and treatments with respect to interference $\mathbf{G}^t$ by optimizing the objective function $L^{\langle G \rangle}$. We show the proof of Theorem~\ref{theorem:t2} in Appendix.~\ref{sec:proof_t2}.

\subsection{Training of \model}
\textbf{Objective Function. }The overall objective function of \model is formalized in the following:
\begin{align}
    L = L^{\langle Y \rangle} + \alpha_{\mathbf{A}} L^{\langle A \rangle} + \alpha_{\mathbf{G}} L^{\langle G \rangle},
\end{align}
where coefficients $\alpha_{\mathbf{A}}$, $\alpha_{\mathbf{G}}$ are the strengths of the treatment balancing and interference balancing, respectively. By adversarially optimizing $L$, the latent representation trajectory $\mathbf{Z}_i^t$ is able to predict the outcome trajectory $\mathbf{Y}_i^t$ while remaining invariant to the treatments $\mathbf{A}_i^t$ and interference $\mathbf{G}_i^t$ (combined with treatments), which enables the unbiased counterfactual outcome estimation in multi-agent dynamical systems. 

\textbf{Alternative Training as Trade-Off. } In practice, we find that directly training \model with the overall loss function $L$ may not be stable as $L^{\langle A \rangle}$ and $L^{\langle G \rangle}$ could hinder the ability of latent representation trajectory $\mathbf{Z}_i^t$ to predict the outcome. Therefore, we trade-off the training of \model in an alternative manner between $L$ and $L^{\langle Y \rangle}$, to ensure that $\mathbf{Z}_i^t$ is capable of predicting outcomes. Specifically, we switch the training iterations between $L$ and $L^{\langle Y \rangle}$ with a ratio of $K$, i.e., $\frac{Iter_{L}}{Iter_{L^{\langle Y \rangle}}} = K$, where $Iter$ means the number of training iterations and $K$ is a tunable hyperparameter. We elaborate on the training procedure in Appendix.~\ref{sec:pseudo}.

\section{Experiments}

\subsection{Experimental Settings}
\textbf{Dataset. } In observational data, we only have factual outcomes but not counterfactual outcomes. Therefore, we use semi-synthetics data to evaluate \model as in~\cite{ma2022learning,guo2020learning,veitch2019using}. That is, we use two real graphs Flickr and BlogCatalog~\cite{guo2020learning,chu2021graph,ma2021deconfounding} and use a Pharmacokinetic-Pharmacodynamic (PK-PD) model~\cite{goutelle2008hill} to simulate the continuous trajectory of treatments and potential outcomes~\cite{BicaAJS20,SeedatIBQS22}. The data simulation mimics the vaccine example in the real world. We introduce  the data simulation process in detail in Appendix.~\ref{sec:exp_setting}.  

\textbf{Metric. } We focus on counterfactual outcomes estimation in this paper, which is a continuous value. Therefore, we use mean square errors (MSE) as our metric to evaluate the performance of our model, which is formalized as $MSE:= \frac{1}{N}\frac{1}{T}\sum_i^N\sum_t^T\left(\hat{\mathbf{Y}}_i^t - \mathbf{Y}_i^t\right)^2$.

\textbf{Baselines. } The scope of our model is in continuous-time causal inference, therefore we compare \model with the following baselines: \textbf{CDE}~\cite{kidger2020neural}: Ordinary differential equations with external inputs to adjust the continuous trajectory. \textbf{GraphODE}~\cite{huang2020learning} Ordinary differential equations model with graph neural networks (GNNs) based ODE functions. \textbf{TE-CDE}~\cite{SeedatIBQS22}:  the state-of-the-art model for continuous-time counterfactual outcomes estimation based on neural controlled differential equations (NeuralCDE).

\textbf{Implementation. } The parameters of \model are set as follows: the dimension of latent representations is $64$; the ODE solver is the Euler method; the balancing degrees are $\alpha_{\mathbf{A}}=\alpha_{\mathbf{G}}=0.5$. For training hyperparameters, the learning rate is $0.0001$; the default alternative training ratio $K$ is $4$. We train the model $5000$ epochs and select the best model according to the performance on the validation set. The parameters are optimized by Adam~\cite{kingma2014adam}. We run all experiments on a Lambda Labs instance with one A100 GPU.

\begin{table}[!htp]
\centering
\scriptsize
\setlength\tabcolsep{0.5pt}
\fontsize{6.5}{10}\selectfont  
\caption{Counterfactual outcomes estimation errors on two datasets. ``BC'' is the abbreviation of the BlogCatalog dataset. The errors are broken down in x-step future estimation ($x\in[1,2,3,4,5]$). MSE errors are reported. The best results are in boldface and the second best results are \underline{underlined}. $\model$-N is the variant of our model without any balancing; $\model$-T means balance only w.r.t. treatments; $\model$-I denotes balance only w.r.t. interference.}\label{tb:cf_in}
\vspace{-10pt}
\begin{tabular}{ccccccccc}\toprule[1.1pt]
Dataset &Model &1-step &2-step &3-step &4-step &5-step &Overall \\\midrule
\multirow{6}{*}{Flickr} 

&CDE &0.134\tiny{$\pm$0.015} &0.164\tiny{$\pm$0.017} &0.198\tiny{$\pm$0.021} &0.237\tiny{$\pm$0.023} &0.281\tiny{$\pm$0.026} &0.203\tiny{$\pm$0.205} \\
&GraphODE &0.237\tiny{$\pm$0.013} &0.276\tiny{$\pm$0.010} &0.313\tiny{$\pm$0.016} &0.347\tiny{$\pm$0.018} &0.379\tiny{$\pm$0.021} &0.310\tiny{$\pm$0.016} \\
&TE-CDE &0.189\tiny{$\pm$0.025} &0.216\tiny{$\pm$0.021} &0.246\tiny{$\pm$0.027} &0.281\tiny{$\pm$0.041} &0.326\tiny{$\pm$0.063} &0.252\tiny{$\pm$0.031} \\
\cmidrule(l{4pt}){2-8}
&\model-N  &0.089\tiny{$\pm$0.006} &0.102\tiny{$\pm$0.007} &0.114\tiny{$\pm$0.009} &0.126\tiny{$\pm$0.010} &0.139\tiny{$\pm$0.012} &0.114\tiny{$\pm$0.008} \\
&\model-T &\underline{0.058\tiny{$\pm$0.017}} &\underline{0.066\tiny{$\pm$0.020}} &\underline{0.075\tiny{$\pm$0.023}} &\underline{0.084\tiny{$\pm$0.027}} &\underline{0.098\tiny{$\pm$0.036}} &\underline{0.076\tiny{$\pm$0.025}} \\
&\model-I &0.069\tiny{$\pm$0.007} &0.080\tiny{$\pm$0.008} &0.091\tiny{$\pm$0.009} &0.103\tiny{$\pm$0.011} &0.115\tiny{$\pm$0.012} &0.092\tiny{$\pm$0.009} \\
&\model &\textbf{0.056\tiny{$\pm$0.009}} &\textbf{0.060\tiny{$\pm$0.009 }}&\textbf{0.067\tiny{$\pm$0.009}} &\textbf{0.070\tiny{$\pm$0.010}} &\textbf{0.077\tiny{$\pm$0.012}} &\textbf{0.065\tiny{$\pm$0.010}} \\\midrule[0.8pt]
\multirow{6}{*}{BC} 
&CDE &0.255\tiny{$\pm$0.120} &0.324\tiny{$\pm$0.178} &0.407\tiny{$\pm$0.263} &0.515\tiny{$\pm$0.383} &0.640\tiny{$\pm$0.549} &0.427\tiny{$\pm$0.296} \\
&GraphODE &0.195\tiny{$\pm$0.018} &0.223\tiny{$\pm$0.023} &0.251\tiny{$\pm$0.028} &0.280\tiny{$\pm$0.033} &0.309\tiny{$\pm$0.040} &0.252\tiny{$\pm$0.028} \\
&TE-CDE &0.316\tiny{$\pm$0.086} &0.351\tiny{$\pm$0.089} &0.399\tiny{$\pm$0.093} &0.493\tiny{$\pm$0.137} &0.725\tiny{$\pm$0.351} &0.457\tiny{$\pm$0.127} \\
\cmidrule(l{4pt}){2-8}
&\model-N &0.167\tiny{$\pm$0.012} &0.188\tiny{$\pm$0.015} &0.209\tiny{$\pm$0.018} &0.228\tiny{$\pm$0.023} &0.246\tiny{$\pm$0.028} &0.207\tiny{$\pm$0.019} \\
&\model-T &\textbf{0.139\tiny{$\pm$0.015}} &\textbf{0.154\tiny{$\pm$0.019}} &\textbf{0.172\tiny{$\pm$0.025}} &\textbf{0.189\tiny{$\pm$0.027}} &\textbf{0.202\tiny{$\pm$0.031}} &\textbf{0.171\tiny{$\pm$0.023}} \\
&\model-I &0.164\tiny{$\pm$0.016} &0.188\tiny{$\pm$0.020} &0.210\tiny{$\pm$0.024} &0.232\tiny{$\pm$0.029} &0.253\tiny{$\pm$0.035} &0.209\tiny{$\pm$0.025} \\
&\model &\underline{0.148\tiny{$\pm$0.015}} &\underline{0.166\tiny{$\pm$0.019}} &\underline{0.186\tiny{$\pm$0.023}} &\underline{0.205\tiny{$\pm$0.025}} &\underline{0.229\tiny{$\pm$0.029}} &\underline{0.186\tiny{$\pm$0.021}} \\
\bottomrule[1.1pt]
\end{tabular}
\vspace{-10pt}
\end{table}

\subsection{Can \model Deliver Accurate Estimations of Counterfactual Outcomes in Multi-Agent Dynamical Systems?}\label{sec:results}
We compare \model to three lines of models: 1) Continuous-time dynamical prediction models CDE and GraphODE. Note that these baselines are not causal models since they only preserve the dynamical statistical associations.  2) Continuous-time causal inference model TE-CDE. But it is not capable of capturing the mutual dependencies between units in multi-agent dynamical systems. 3) Variants of \model. We consider three variants: \model-N means there is no any balancing ($\alpha_{\mathbf{A}}=\alpha_{\mathbf{G}}=0$); \model-T denotes balancing only w.r.t. treatments ($\alpha_{\mathbf{A}}=1$, $\alpha_{\mathbf{G}}=0$); \model-I means balancing only w.r.t. interference ($\alpha_{\mathbf{A}}=0$, $\alpha_{\mathbf{G}}=1$). For a multi-agent dynamical system with $N$ nodes and length-$A$ treatment trajectories, the total number of possible treatments for all nodes is $O(A\cdot2^N)$. Therefore, it is intractable to enumerate all treatment combinations. To this end, we randomly flip $50\%$ of all observed treatments in each experiment. We estimate five-step (timestamp) ahead counterfactual outcomes and report estimation errors in Table.~\ref{tb:cf_in}. Generally, \model and the variants outperform the baselines by substantial margins. It is noteworthy that, despite being a causal model, TE-CDE performs clearly worse than the family of \model, because it ignores the mutual influence between units. This underscores our motivation to address this unique challenge in multi-agent dynamical systems. We also note \model-N is generally the weakest estimator among all variants, confirming the effectiveness of our proposed balancing objectives.

\begin{figure}[h]
 \centering
 \includegraphics[width=1\columnwidth]{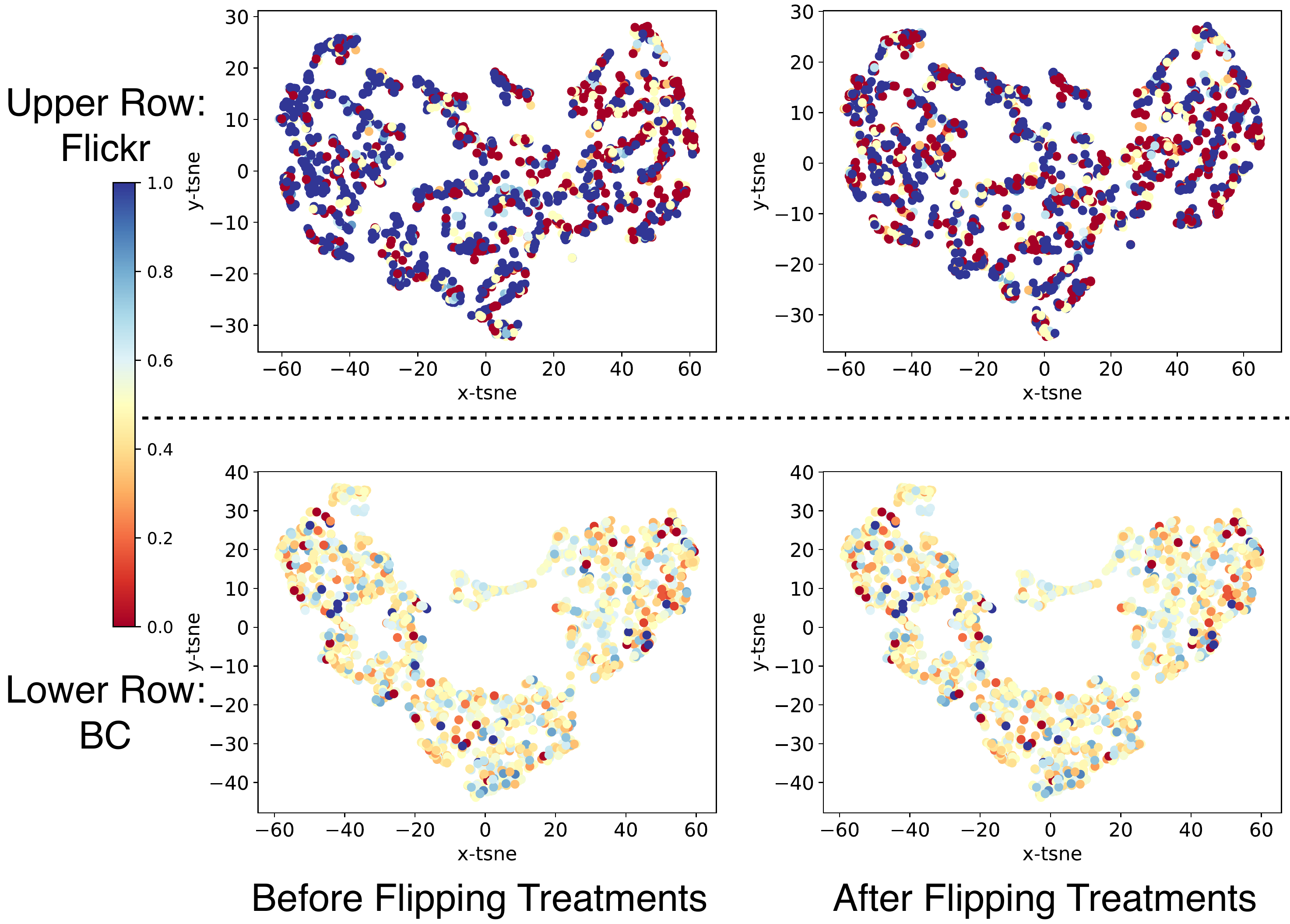}
  \caption{T-SNE projections of latent representations ``before'' (factual) and ``after'' (counterfactual) flipping the treatments. Each point represents a unit's latent representation. The points are colored by the units' corresponding interference. Upper row: Flickr dataset; Lower Row: BlogCatalog dataset.}\label{fig:why_bad}
\end{figure}

\textbf{Why Does \model-T Show Superior Performance Than \model on BlogCatalog Dataset? } On BlogCatalog dataset, we observe that balancing solely with respect to treatments (\model-T) yields the lowest estimation errors, even outperforming balancing both treatments and interference (\model). To understand this phenomenon, we project all units' latent representations $\mathbf{Z}^t$ into 2-D embeddings using T-SNE~\cite{van2008visualizing} and color these 2-D points by their corresponding interference in Fig.~\ref{fig:why_bad}. Specifically, we  compare the units' interference before and after flipping the treatments. Compared to Flickr, we notice that in BlogCatalog 1) the latent representations are already comparatively more balanced before flipping the treatments, and 2) the units' interference does not change significantly after flipping the treatments. This suggests that balancing solely with respect to treatments might be sufficient in the BlogCatalog dataset. Actually, since we use the same data simulation protocol for Flickr and BlogCatalog, this difference in interference distribution is expected to be caused by their distinct graph structures. Specifically, the average and standard derivation of node degrees of the two datasets are Flickr: $2.0\pm1.7$; BlogCatalog: $30.7\pm25.1$. Intuitively, the interference of high degrees nodes is more resistant to flipping a random portion of their neighbors, which is pretty common among nodes in BlogCatalog. We provide further breakdown studies to better understand how node degrees affect counterfactual outcomes estimation in Sec.~\ref{sec:degree_exp}.

\begin{figure}[h]
 \centering
 \includegraphics[width=1\columnwidth]{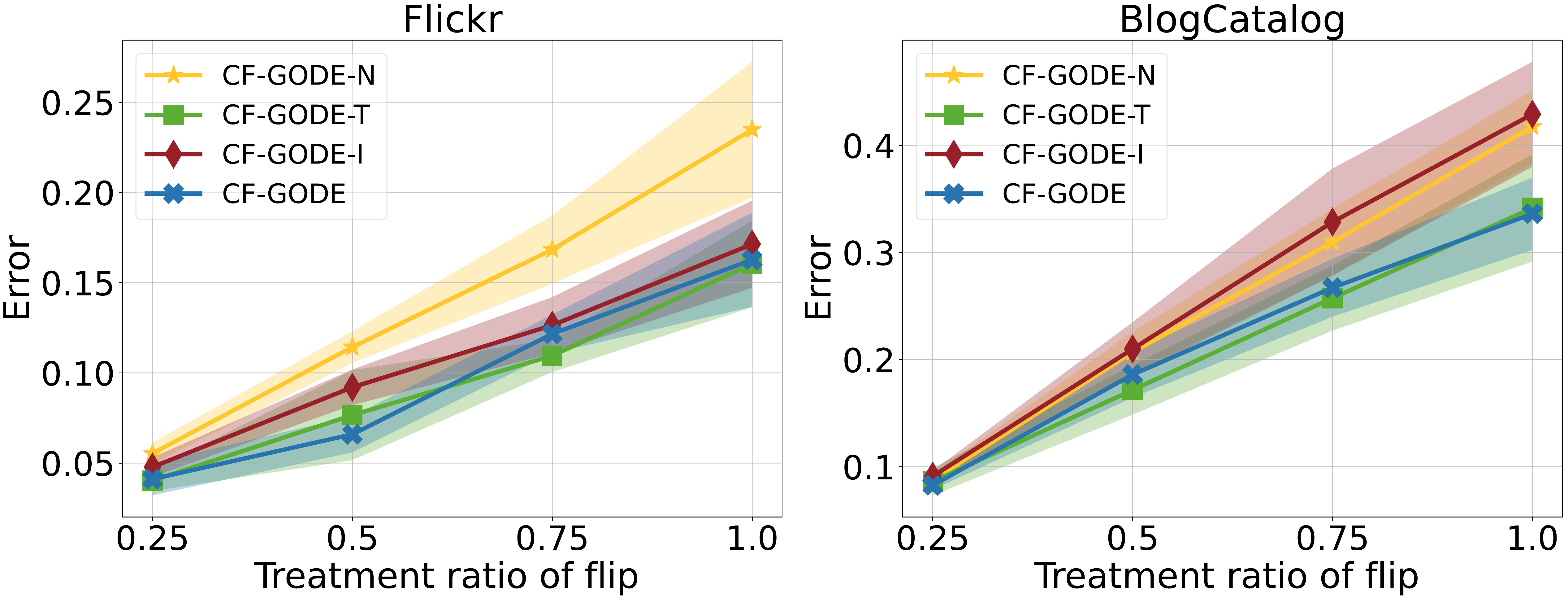}
  \caption{Counterfactual outcomes estimation errors w.r.t. the percentage of units in the graph whose treatments are flipped. Left: Flickr dataset; Right: BlogCatalog dataset.}\label{fig:flip}
\vspace{-15pt}
\end{figure}

\subsection{How Does \model Respond to The Flipping of Counterfactual Treatments?}

In the above experiments, the default treatment flipping ratio is set at $50\%$. It's intriguing to investigate how \model reacts to different flipping ratios, as this would indicate the degree of difference between factual and counterfactual outcomes in terms of treatments. To this end, we set the flip ratio as $[25\%,50\%,75\%,100\%]$, and present the results of \model and its variants under these settings in Fig.~\ref{fig:flip}. As expected, all models perform worse as the flip ratio increases, since the counterfactual treatments diverge further from the observed factual treatments. However, we observe that with balancing objectives, the error of \model increases generally slowly, highlighting the need for balancing objectives.

\begin{figure}[h]
 \centering
 \includegraphics[width=1\columnwidth]{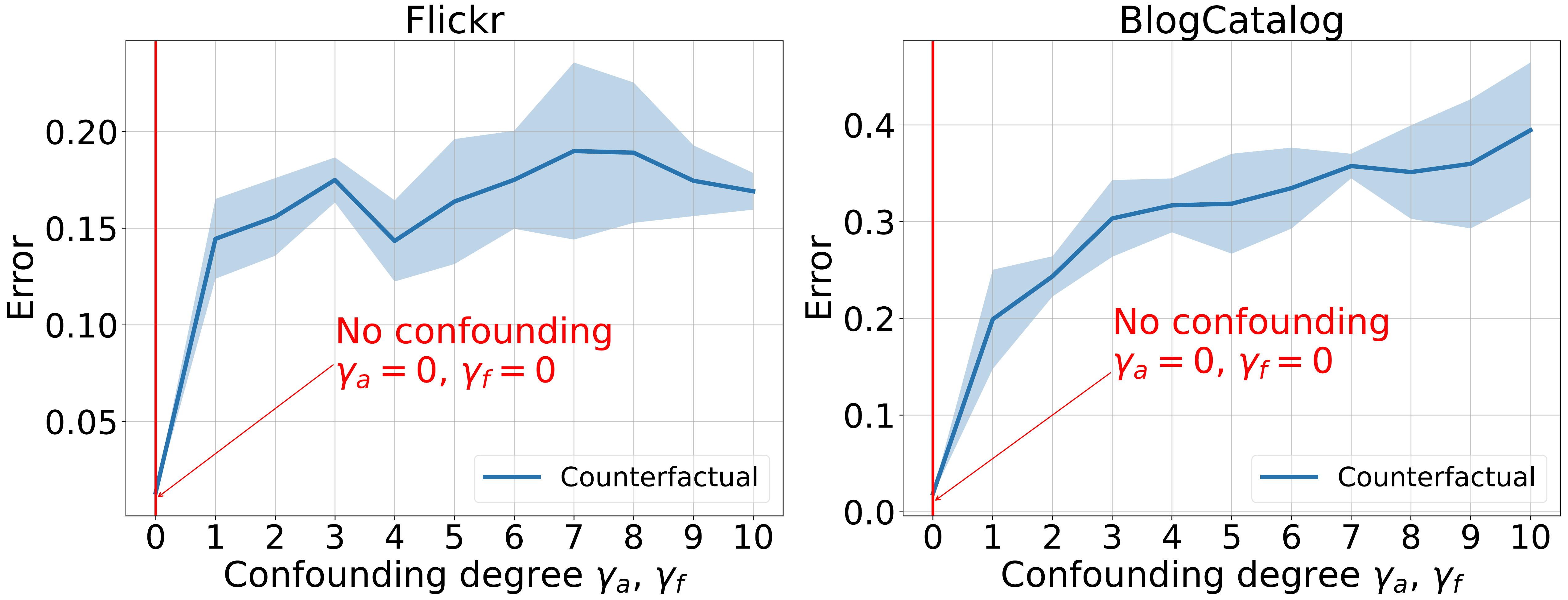}
  \caption{Counterfactual outcomes estimation errors w.r.t. 11 different confounding degrees $\gamma_a$ and $\gamma_f$. Note $\gamma_a$ and $\gamma_f$ are set as the same values in each experiment. The \redtext{red} line points to the ``no confounding bias'' setting ($\gamma_a = \gamma_f=0$).} \label{fig:confounding}
  \vspace{-10pt}
\end{figure}

\subsection{How Does \model Respond to Different Confounding Degrees?}\label{sec:degree}

In data simulation, we use coefficients $\gamma_a$, $\gamma_f$ to control the degree of the time-dependent and neighbor confounding bias. With larger values of these coefficients, the confounding bias is more severe, leading to increasingly imbalanced data. To study how \model works under varying confounding degrees, we set $\gamma_a=\gamma_f = \gamma$, where $\gamma\in[0,1,2,3,4,5,6,7,8,9,10]$, and present the counterfactual outcomes estimation errors of \model under these conditions in Fig.\ref{fig:confounding}. The errors increase as the confounding bias becomes more severe, but the rate of increase is relatively smooth, particularly on Flickr dataset. This implicates \model's robustness against high degree confounding bias. Additionally, \model produces low errors when there is no confounding bias ($\gamma_a=\gamma_f=0$), which demonstrates the compatibility of \model with such settings.

\subsection{How Does Graph Structure Impact Counterfactual Outcomes Estimation?}\label{sec:degree_exp}
As discussed in Sec.~\ref{sec:results}, the graph structure affects \model's performance on counterfactual outcomes estimation. To gain deeper insights into this phenomenon, we break down the estimation errors on BlogCatalog dataset according to node degrees in Table.~\ref{tb:degree}. Interestingly, we find that \model's counterfactual estimation errors decrease as the node degrees become higher. Intuitively, this might also be because the interference of high-degree nodes is more stable. However, in this paper, we do not have a theoretical understanding of the relationships between estimation errors and node degrees. We leave this line of research in future study.

\begin{table}[!htp]
\vspace{-10pt}
\centering
\fontsize{8}{10}\selectfont
\setlength{\tabcolsep}{12pt}
\caption{The breakdown of counterfactual outcomes estimation errors by units (nodes) degrees in BlogCatalog dataset.}\label{tb:degree}
\vspace{-10pt}
\begin{tabular}{ccccc}\toprule[1.1pt]
Degree &$\#$Nodes &Percentage$\%$ &Error \\\midrule
(0,5] &176 &10.2 &0.243\scriptsize{$\pm$0.337} \\
(5,10] &185 &10.6 &0.235\scriptsize{$\pm$0.344} \\
(10,20] &389 &22.5 &0.216\scriptsize{$\pm$0.296} \\
(20,30] &312 &18.0 &0.218\scriptsize{$\pm$0.314} \\
(20,30] &200 &11.5 &0.221\scriptsize{$\pm$0.295} \\
(30,40] &165 &9.5 &0.195\scriptsize{$\pm$0.284} \\
(40,50] &98 &5.7 &0.176\scriptsize{$\pm$0.269} \\
$>$50 &207 &12.0 &0.187\scriptsize{$\pm$0.261} \\
\bottomrule[1.1pt]
\end{tabular}
\end{table}
\vspace{-0.5cm}
\subsection{Can \model Be Generalized to New Multi-Agent Dynamical Systems?}
Standard counterfactual outcome estimations are typically conducted on units whose factual outcomes have been observed. However, the estimation of the potential outcomes on \emph{new} multi-agent dynamical systems is also of great importance. For instance, to predict the effects of an initial vaccine distribution strategy for a new community. To assess \model's ability to generalize to new systems, i.e., new graphs, we split the original graph into three subgraphs, denoted as training/validation/testing graphs (details in Appendix.~\ref{sec:exp_setting}). We train our model on the training graph and evaluate its potential outcome estimation on the testing graph. We report the results in Table.~\ref{tb:cf_out}. We note that the performance of \model and the variants on new graphs are also generally better than baselines, which is consistent with the estimation of the counterfactual outcomes within the same graph (Sec.~\ref{sec:results}). This demonstrates our model's generalizability to new multi-agent dynamical systems.

\begin{table}[!htp]
\centering
\scriptsize
\setlength\tabcolsep{0.5pt}
\fontsize{6.5}{10}\selectfont  
\caption{Generalization errors of potential outcomes prediction for new multi-agent dynamical systems (new graphs) on two datasets. ``BC'' is the abbreviation of the BlogCatalog dataset. The errors are broken down in x-step future estimation ($x\in[1,2,3,4,5]$). MSE errors are reported. The best results are in boldface and the second best results are \underline{underlined}. $\model$-N is the variant of our model without any balancing; $\model$-T means balance only w.r.t. treatments; $\model$-I denotes balance only w.r.t. interference.}\label{tb:cf_out}
\vspace{-10pt}
\begin{tabular}{ccccccccc}\toprule[1.1pt]
Dataset &Model &1-step &2-step &3-step &4-step &5-step &Overall \\\midrule
\multirow{7}{*}{Flickr} 
&CDE &0.134\tiny{$\pm$0.017} &0.166\tiny{$\pm$0.022} &0.201\tiny{$\pm$0.026} &0.241\tiny{$\pm$030} &0.285\tiny{$\pm$0.034} &0.205\tiny{$\pm$0.025} \\
&GraphODE &0.209\tiny{$\pm$0.010} &0.243\tiny{$\pm$0.012} &0.275\tiny{$\pm$0.015} &0.306\tiny{$\pm$0.018} &0.335\tiny{$\pm$0.022} &0.274\tiny{$\pm$0.014} \\
&TE-CDE &0.193\tiny{$\pm$0.023} &0.221\tiny{$\pm$0.021} &0.251\tiny{$\pm$0.027} &0.285\tiny{$\pm$0.040} &0.328\tiny{$\pm$0.061} &0.256\tiny{$\pm$0.030} \\
\cmidrule(l{4pt}){2-8}
&\model-N &0.087\tiny{$\pm$0.06} &0.099\tiny{$\pm$0.008} &0.111\tiny{$\pm$0.009} &0.122\tiny{$\pm$0.010} &0.134\tiny{$\pm$0.011} &0.111\tiny{$\pm$0.008} \\
&\model-T &\textbf{0.057\tiny{$\pm$0.014}} &\underline{0.064\tiny{$\pm$0.016}} &\underline{0.072\tiny{$\pm$0.018}} &\underline{0.081\tiny{$\pm$0.022}} &\underline{0.092\tiny{$\pm$0.029}} &\underline{0.738\tiny{$\pm$0.020}} \\
&\model-I &\underline{0.071\tiny{$\pm$0.007}} &0.083\tiny{$\pm$0.008} &0.096\tiny{$\pm$0.009} &0.109\tiny{$\pm$0.010} &0.122\tiny{$\pm$0.011} &0.096\tiny{$\pm$0.009} \\
&\model &\textbf{0.057\tiny{$\pm$0.008}} &\textbf{0.062\tiny{$\pm$0.009}} &\textbf{0.067\tiny{$\pm$0.010}} &\textbf{0.073\tiny{$\pm$0.011}} &\textbf{0.081\tiny{$\pm$0.013}} &\textbf{0.069\tiny{$\pm$0.010}} \\\midrule
\multirow{7}{*}{BC} 
&CDE &0.251\tiny{$\pm$0.113} &0.317\tiny{$\pm$0.174} &0.399\tiny{$\pm$0.259} &0.500\tiny{$\pm$0.380} &0.625\tiny{$\pm$0.548} &0.418\tiny{$\pm$0.294} \\
&GraphODE &0.183\tiny{$\pm$0.021} &0.210\tiny{$\pm$0.026} &0.237\tiny{$\pm$0.032} &0.265\tiny{$\pm$0.039} &0.293\tiny{$\pm$0.046} &0.237\tiny{$\pm$0.033} \\
&TE-CDE &0.328\tiny{$\pm$0.092} &0.372\tiny{$\pm$0.104} &0.440\tiny{$\pm$0.164} &0.582\tiny{$\pm$0.378} &0.933\tiny{$\pm$0.966} &0.457\tiny{$\pm$0.127} \\
\cmidrule(l{4pt}){2-8}
&\model-N &0.168\tiny{$\pm$0.008} &0.190\tiny{$\pm$0.009} &0.213\tiny{$\pm$0.012} &0.235\tiny{$\pm$0.015} &0.255\tiny{$\pm$0.021} &0.213\tiny{$\pm$0.013} \\
&\model-T &\textbf{0.141\tiny{$\pm$0.011}} &\textbf{0.157\tiny{$\pm$0.015}} &\textbf{0.175\tiny{$\pm$0.021}} &\textbf{0.192\tiny{$\pm$0.022}} &\textbf{0.203\tiny{$\pm$0.028}} &\textbf{0.174\tiny{$\pm$0.018}} \\
&\model-I &0.164\tiny{$\pm$0.014} &0.187\tiny{$\pm$0.018} &0.209\tiny{$\pm$0.022} &0.231\tiny{$\pm$0.028} &0.253\tiny{$\pm$0.034} &0.209\tiny{$\pm$0.023} \\
&\model &\underline{0.143\tiny{$\pm$0.013}} &\underline{0.162\tiny{$\pm$0.017}} &\underline{0.180\tiny{$\pm$0.022}} &\underline{0.199\tiny{$\pm$0.023}} &\underline{0.214\tiny{$\pm$0.025}} &\underline{0.180\tiny{$\pm$0.019}} \\
\bottomrule[1.1pt]
\end{tabular}
\end{table}


\begin{figure*}[h]
 \centering
 \includegraphics[width=2.\columnwidth]{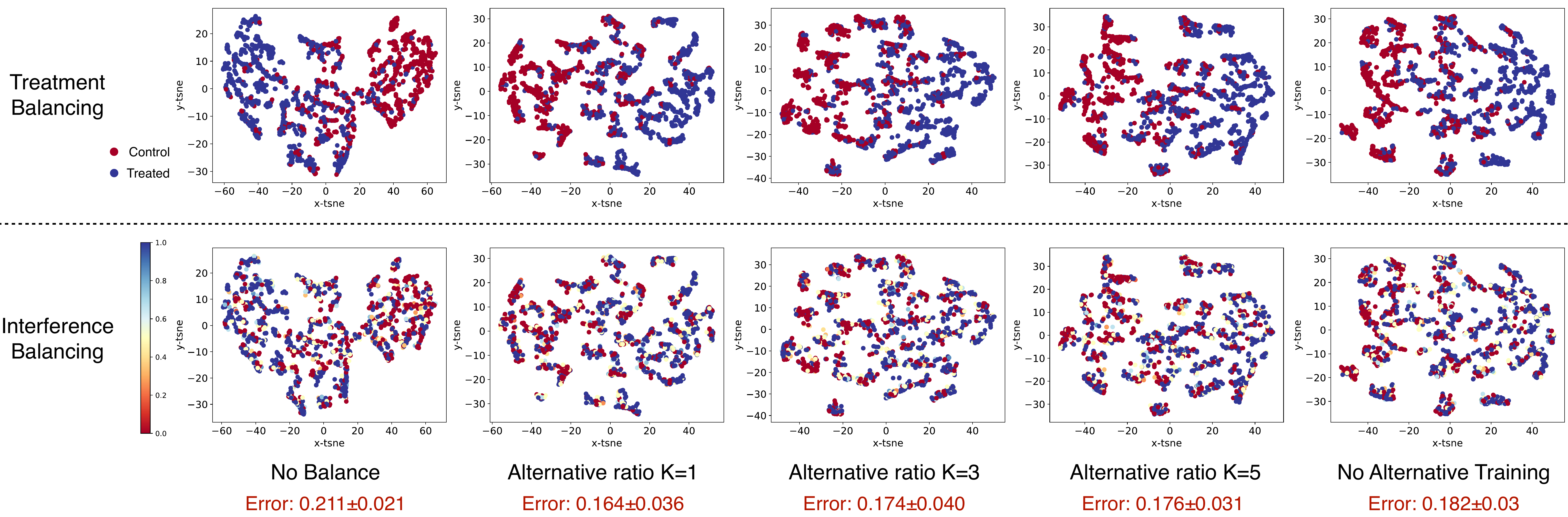}
  \caption{The T-SNE visualization of latent representations under different alternative training settings. Each point represents a unit's latent representation. The top line is colored by the observed treatments and the bottom line is colored by observed interference. Five columns from left to right: 1) only trained on $L^{\langle Y \rangle}$ (no balancing); 2) $\frac{Iter_{L}}{Iter_{L^{\langle Y \rangle}}} = 1$; 3) $\frac{Iter_{L}}{Iter_{L^{\langle Y \rangle}}} = 3$; 4) $\frac{Iter_{L}}{Iter_{L^{\langle Y \rangle}}} = 5$; 5) only trained on $L$ (no alternative training). Each setting's corresponding counterfactual estimation error is marked in \drtext{red}.    }\label{fig:embed}
\end{figure*}

\subsection{How Does Alternative Training Affect \model?}

\model uses an alternative training strategy to trade off the latent representation balancing and potential outcome prediction. We examine how this trade-off is performed under varying alternative ratios $K$, where $K=\frac{Iter_{L}}{Iter_{L^{\langle Y \rangle}}}$ represents the alternating training between the overall loss $L$ and the outcome prediction loss $L^{\langle Y \rangle}$. Fig.~\ref{fig:embed} shows the 2-D T-SNE projections of latent representations and their corresponding counterfactual estimation errors for different K values. We note that compared to solely training on $L^{\langle Y \rangle}$ (i.e., no balancing), training with $L$ is able to force the embeddings more balanced, suggesting the effectiveness of our proposed domain adversarial learning based balancing objectives. In addition, with a smaller $K$, \model achieves better estimation errors, while a bigger $K$ leads to more balanced latent representations. These results confirm that our alternative training is able to trade off between latent representation balancing and potential outcome prediction. In practice, choosing an appropriate value of $K$ is expected to be determined through empirical analysis for each dataset.

\begin{figure}[h]
 \centering
 \includegraphics[width=1\columnwidth]{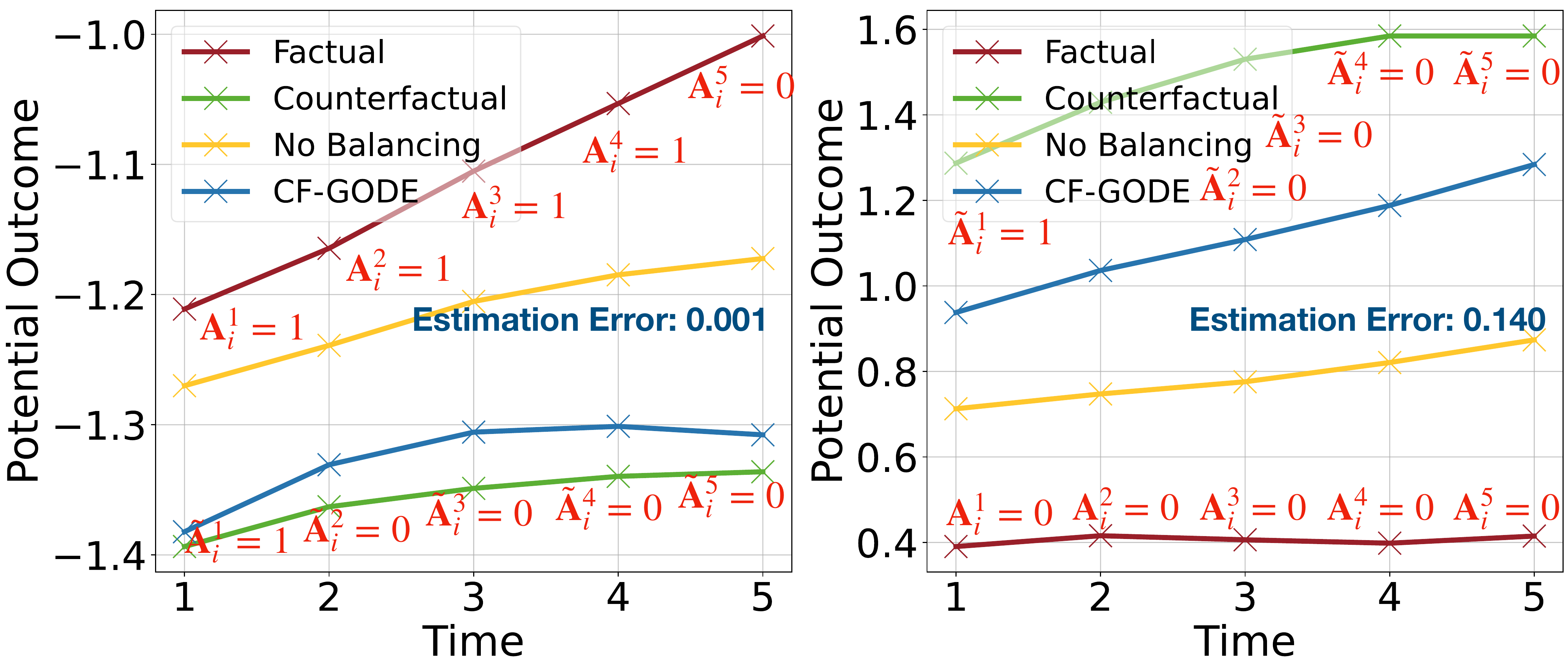}
  \caption{The predicted counterfactual outcomes of \model w/w.o balancing loss functions. The treatments of factual outcomes \redtext{$\mathbf{A}_i^t$} and treatments of counterfactual outcomes \redtext{$\mathbf{\Tilde{A}}_i^t$} are attached around the corresponding curves at each timestamp. The estimation errors are noted in \bluetext{blue}. Results are from Flickr dataset. Left: a successful case; right: a bad case.}\label{fig:case_study}
  \vspace{-15pt}
\end{figure}

\subsection{Case Study: When \model Is Good, and When It Is Not.}

To intuitively understand how \model works in estimating counterfactual outcomes and to study when \model would fail, we sample one successful unit and one failure unit from Flickr dataset. We draw their factual outcomes, counterfactual outcomes, and the estimations made by \model and \model-N (without balancing) in Fig.~\ref{fig:case_study}. In the successful case, the estimate by \model is able to conform to the counterfactual treatment trajectory, while \model-N still follows the factual trajectory. This shows the effectiveness of our proposed balancing objectives. However, \model also makes mistakes. In the failure case, its estimate fails to catch up with the counterfactual outcome trajectory, yielding a non-trivial error. We speculate that this is because the counterfactual outcome of this unit is quite distinct from the factual one in terms of data scale, making counterfactual estimation more difficult.

\subsection{How Hyperparamters Affect \model?}

The two balancing objectives are core to making \model causal. Therefore, we finally study the impact of their degrees, represented by $\alpha_{\mathbf{A}}$ and $\alpha_{\mathbf{G}}$ in the loss function, on the model performance. We test $\alpha_{\mathbf{A}}$ and $\alpha_{\mathbf{G}}$ values evenly ranging from $[0,0,1.0]$, and present the counterfactual outcomes estimation errors for each combination of $\alpha_{\mathbf{A}}$ and $\alpha_{\mathbf{G}}$ in Fig.~\ref{fig:params}.  Our results show that the errors are relatively higher when both $\alpha_{\mathbf{A}}$ and  $\alpha_{\mathbf{G}}$ are in low values, i.e., light balancing. On the other hand, with larger values, the estimation errors generally become lower, but with high variance. This indicates the effectiveness of the balancing objectives but also highlights the instability of domain adversarial learning based balancing.

\begin{figure}[h]
 \centering
 \includegraphics[width=1\columnwidth]{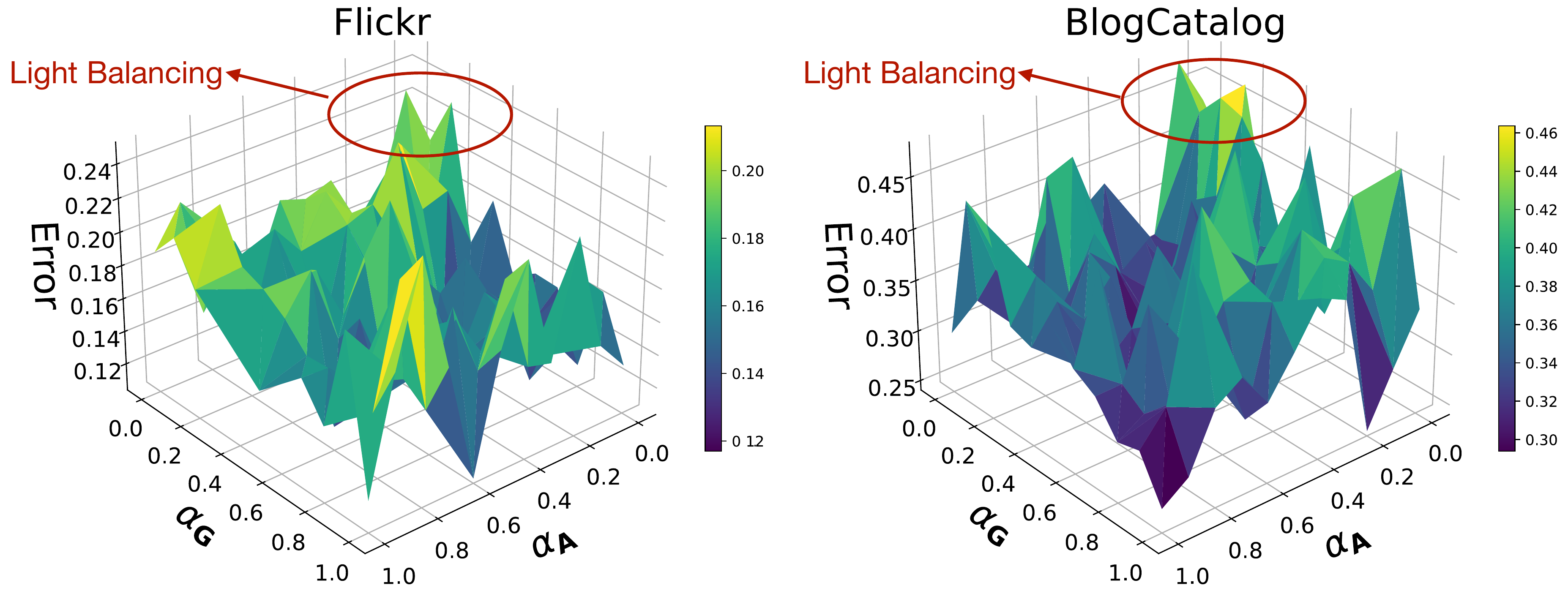}
  \caption{The counterfactual estimation errors w.r.t. different combinations of $\alpha_{\mathbf{A}}$ and  $\alpha_{\mathbf{G}}$. The ``Light Balancing'' settings where $\alpha_{\mathbf{A}}$ and $\alpha_{\mathbf{G}}$ are both in small values  are circles in \drtext{red}.  }\label{fig:params}
\end{figure}

\section{Conclusion}

In this paper, we study continuous-time counterfactual outcomes estimation in multi-agent dynamical systems, where units interact with each other. To this end, we propose \model, a novel causal model based on GraphODE to enable continuous potential outcomes prediction, and domain adversarial learning to remove confounding bias. We provide both theoretical justification and empirical analyses to demonstrate the effectiveness of our model. One limitation of \model is it needs the assumption of strong ignorability for multi-agent dynamical systems, which is not testable in practice. Recent studies relax this assumption by inferring latent proxy variables~\cite{wang2019blessings}, which could be a potential solution.
\begin{acks}
This work was partially supported by NSF 2211557, NSF 1937599, NSF 2119643, NSF 2303037,  NASA, SRC, Okawa Foundation Grant, Amazon Research Awards, Cisco research grant, Picsart Gifts, and Snapchat Gifts.
\end{acks}

\clearpage
\bibliographystyle{ACM-Reference-Format}
\bibliography{reference}

\appendix
\section{Appendix}

\subsection{Proofs of Theorems}

\subsubsection{Proof of Theorem.~\ref{theorem:t1} }\label{sec:proof_t1}

\begin{theorem_copy}
 Let $j\in \{0,1\}$ be the binary treatment values, and let $N$ and $T$ denote the number of units and observed timestamp lengths, respectively. Let $P_j^t = P(\mathbf{Z}^t\mid \mathbf{A}^t=j)$, be the distribution of latent representation $\mathbf{Z}^t$ for the group of units with treatments $j$ at time $t$. Let $f$, $\phi$, $d_\mathbf{A}^j$ be the initial state encoder, the ODE function of \odelower, and logits of predicting treatment $j$. The necessary and sufficient condition for the min-max game in Eq.~(\ref{eqn:a_pred}) to be optimal is $P_0^t = P_1^t, \forall t\in\left(t_0,t_1\cdots t_T\right)$.
 \end{theorem_copy}
  The proof of Theorem~\ref{theorem:t1} follows \cite{BicaAJS20, MelnychukFF22} and consists of two steps: to find the optimal $d_\mathbf{A}^{j}$ while fixing $f$ and $\phi$, and then to prove the optimal $f$ and $\phi$ while fixing $d_\mathbf{A}^{j}$ can balance the latent representations, i.e., $P_0^t = P_1^t$. The first step is given by Proposition~\ref{pro:1}.
\begin{proposition}\label{pro:1}
(Proposition 1 in~\cite{MelnychukFF22}) Let $\alpha_{j}=P(A^t=j)$. When the initial state encoder $f$ and ODE function $\phi$ are fixed, the optimal $d_\mathbf{A}^j$ at time $t$ is:
\begin{equation}
    {d_\mathbf{A}^{j}}^* = \frac{\alpha_{j} P_j^t}{\sum_{j'\in\{0,1\}} \alpha_{j'} P_{j'}^t}.
\end{equation}
\end{proposition}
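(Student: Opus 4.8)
The plan is to derive the optimal discriminator by the standard pointwise argument for cross-entropy classifiers, specialized to our adversarial objective. Since $f$ and $\phi$ are held fixed, the latent trajectory $\mathbf{Z}^t$ is a fixed function of the observations, and the gradient reversal layer $r(\cdot)$ is the identity in the forward pass, so $r(\mathbf{Z}_i^t)=\mathbf{Z}_i^t$ and the inner minimization over $d_\mathbf{A}^j$ in Eq.~(\ref{eqn:a_pred}) is an ordinary multiclass cross-entropy loss. The objective in Eq.~(\ref{eqn:a_pred}) is an average over units and timestamps; since the proposition characterizes the optimum \emph{at a fixed time} $t$ and the time-$t$ summand depends on the discriminator only through $d_\mathbf{A}^j(\mathbf{Z}^t)$, it suffices to minimize the time-$t$ term alone. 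First I would pass from the empirical average over the $N$ units to its population counterpart, writing the time-$t$ objective as
$$\mathbb{E}_{\mathbf{Z}^t,\mathbf{A}^t}\Big[-\sum_{j\in\{0,1\}}\mathds{1}_{(\mathbf{A}^t=j)}\log d_\mathbf{A}^j(\mathbf{Z}^t)\Big].$$

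Next I would condition on $\mathbf{Z}^t$ and use the tower property to replace the indicator by the posterior treatment probability, obtaining $\mathbb{E}_{\mathbf{Z}^t}\big[-\sum_{j} P(\mathbf{A}^t=j\mid\mathbf{Z}^t)\log d_\mathbf{A}^j(\mathbf{Z}^t)\big]$. Because the classifier outputs a probability vector, for each fixed value $z$ of $\mathbf{Z}^t$ the integrand $-\sum_{j} P(\mathbf{A}^t=j\mid\mathbf{Z}^t=z)\log d_\mathbf{A}^j(z)$ is to be minimized over the simplex $\{d_\mathbf{A}^0,d_\mathbf{A}^1\ge 0,\ d_\mathbf{A}^0+d_\mathbf{A}^1=1\}$. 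A pointwise optimization — either by introducing a Lagrange multiplier for the normalization constraint, or directly via Gibbs' inequality (equivalently, nonnegativity of the KL divergence between the true posterior and the predicted distribution) — shows the unique minimizer is attained exactly when $d_\mathbf{A}^j(z)=P(\mathbf{A}^t=j\mid\mathbf{Z}^t=z)$.

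Finally I would apply Bayes' rule to rewrite this posterior in the notation of the statement. Writing $\alpha_j=P(\mathbf{A}^t=j)$ and $P_j^t=P(\mathbf{Z}^t\mid\mathbf{A}^t=j)$ gives $P(\mathbf{A}^t=j\mid\mathbf{Z}^t)=\alpha_j P_j^t\big/\sum_{j'\in\{0,1\}}\alpha_{j'}P_{j'}^t$, which is precisely the claimed ${d_\mathbf{A}^j}^*$. I expect the main obstacle to be the rigor at the transition from the empirical loss in Eq.~(\ref{eqn:a_pred}) to the population expectation, and the careful justification of the pointwise minimization under the simplex constraint (ensuring the optimizer is the interior posterior rather than a boundary point); once those are settled, the Bayes' rule step and the restriction to the time-$t$ term are routine.
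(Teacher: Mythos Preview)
Your proposal is correct and follows essentially the same approach as the paper: both pass to a population expectation, perform a pointwise minimization over the probability simplex (the paper uses Lagrange multipliers, which you also mention as one option), and arrive at the Bayes-posterior form. The only cosmetic difference is the order of conditioning---the paper decomposes the expectation by $\mathbf{A}^t$ first so that the ratio $\alpha_j P_j^t/\sum_{j'}\alpha_{j'}P_{j'}^t$ falls out of the Lagrangian directly, whereas you condition on $\mathbf{Z}^t$ first to obtain $P(\mathbf{A}^t=j\mid\mathbf{Z}^t)$ and then invoke Bayes' rule; both routes are standard and equivalent.
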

\begin{proof}
 When fixing $f$ and $\phi$, ${d_\mathbf{A}^{j}}^*$ is obtained by:
 \begin{align}
    {d_\mathbf{A}^{j}}^* &= \underset{d_\mathbf{A}^j}{\text{argmin}}\sum_{j\in\{0,1\}}\mathds{1}_{(\mathbf{A}^t=j)}-\log\left(d_\mathbf{A}^j\left(r(\mathbf{Z}^t)\right)\right),\label{eqn:optim_da}\\
     &\qquad\qquad\text{subject to } \sum_{j\in\{0,1\}} d_\mathbf{A}^j\left(r(\mathbf{Z}^t)\right)=1\label{eqn:da_cons},
 \end{align}
where Eq.~(\ref{eqn:optim_da}) is adapted from Eq.~(\ref{eqn:a_pred}). Note Eq.~(\ref{eqn:optim_da}) can be applied to any $i$ and $t$ in Eq.~(\ref{eqn:a_pred}), so we disregard the expectation with respect to them. Let $\alpha_{j}=P(A^t=j)$ and $P_j^t = P(\mathbf{Z}^t\mid \mathbf{A}^t=j)$. Then Eq.~(\ref{eqn:optim_da}) can be rewritten as: 
\begin{align}
        {d_\mathbf{A}^{j}}^* &= \underset{d_\mathbf{A}^j}{\text{argmin}}\sum_{j\in\{0,1\}}-\mathbb{E}_{\mathbf{Z}^t\sim P_j^t}\alpha_{j}\log\left(d_\mathbf{A}^j\left(r(\mathbf{Z}^t)\right)\right)\label{eqn:optim_da2}\\
        & = \underset{d_\mathbf{A}^j}{\text{argmin}}\sum_{j\in\{0,1\}}-\int_{\mathbf{Z'}^t}\alpha_{j}\log\left(d_\mathbf{A}^j\left(r({\mathbf{Z'}^t})\right)\right)P_j^t d\mathbf{Z'}^t.\label{eqn:optim_da3}
\end{align}
We can also take pointwise optimization for any $\mathbf{Z'}^t$ in Eq.~(\ref{eqn:optim_da3})  Then by combining Equation (\ref{eqn:optim_da3}) with the constraint in Equation (\ref{eqn:da_cons}) and using Lagrange multipliers, we have: 
\begin{align}
    {d_\mathbf{A}^{j}}^* &= \underset{d_\mathbf{A}^j}{\text{argmin}}\sum_{j\in\{0,1\}}-\alpha_{j}\log\left(d_\mathbf{A}^j\left(r({\mathbf{Z'}^t})\right)\right)P_j^t \nonumber\\
    &+ \lambda\left(\sum_{j\in\{0,1\}} d_\mathbf{A}^j\left(r(\mathbf{Z}^t)\right)-1\right)\label{eqn:lagrange}.
\end{align}
Let $J=\sum_{j\in\{0,1\}}-\alpha_{j}\log(d_\mathbf{A}^j(r({\mathbf{Z'}^t})))P_j^t + \lambda(\sum_{j\in\{0,1\}} d_\mathbf{A}^j(r(\mathbf{Z}^t))-1)$ be the objective in Eq.~(\ref{eqn:lagrange}) The optimal values can be obtained by taking partial gradients $\frac{\partial J}{\partial d_\mathbf{A}^{j}} = 0$ and $\frac{\partial J}{\lambda} = 0$, respectively. By computing them jointly we can obtain ${d_\mathbf{A}^{j}}^* = \frac{\alpha_{j}P_j^t}{\sum_{j'\in\{0,1\}} \alpha_{j'}P_{j'}^t}$.
 \end{proof}

The second step is to prove Theorem.~\ref{theorem:t1} that the optimal $f$ and $\phi$ can obtain balanced representations with respect to treatments.
\begin{proof}
With Proposition~\ref{pro:1}, we can fix the optimal ${d_\mathbf{A}^{j}}^*$ and find the condition where Eq.~(\ref{eqn:a_pred}) achieves optimum. Putting ${d_\mathbf{A}^{j}}^*$ into the objective in Eq.~(\ref{eqn:a_pred}) and applying similar simplifications as in Eq.~(\ref{eqn:optim_da3}) and Eq.~(\ref{eqn:lagrange}), we have:
\begin{align}
    f^*,\phi^* &= \underset{f,\phi}{\text{argmax}}\sum_{j\in\{0,1\}}-\mathbb{E}_{\mathbf{Z}^t\sim P_j^t}\log\left(\frac{\alpha_{j}P_j^t}{\sum_{j'\in\{0,1\}} \alpha_{j'}P_{j'}^t}\right)\\
    &= \underset{f,\phi}{\text{argmin}}\sum_{j\in\{0,1\}}\mathbb{E}_{\mathbf{Z}^t\sim P_j^t}\log\left(\frac{P_j^t}{\sum_{j'\in\{0,1\}} \alpha_{j'}P_{j'}^t}\right) + \log(\alpha_{j})\\
    & = \underset{f,\phi}{\text{argmin}}\sum_{j\in\{0,1\}}\text{KL}\left(P_j^t \Bigg| \Bigg| {\sum_{j'\in\{0,1\}} \alpha_{j'}P_{j'}^t}\right)+ \log(\alpha_{j}),\\
\end{align}
where $\text{KL}(\cdot||\cdot)$ is the KL divergence. Note that $\sum_{j\in\{0,1\}}\log(\alpha_{j})$ is constant in observation data. $\text{KL}(\cdot||\cdot)\geq0$ and it reaches $0$ when the two operands are equal. Therefore, to have $f^*,\phi^*$, for $j\in\{0,1\}$, we have $P_0^t = P_1^t ={\sum_{j'\in\{0,1\}} \alpha_{j'}P_{j'}^t}$. Therefore, the optimal $f^*,\phi^*$ are those who achieve $P_0^t = P_1^t$. We can apply this to all the $N$ units and all the $T$ observed timestamps to obtain the global optimum of Eq.~(\ref{eqn:a_pred}), which concludes the proof of Theorem~\ref{theorem:t1}.    
\end{proof}

\subsubsection{Proof of Theorem~\ref{theorem:t2}}\label{sec:proof_t2}
\begin{theorem_copy}
Let $f$, $\phi$, $d_\mathbf{G}$ be the initial state encoder, the ODE function of \odelower, and the interference predictor. The necessary and sufficient condition for min-max game in Eq.~(\ref{eqn:pred_g}) to be optimal is $P(\mathbf{Z}^t,\mathbf{A}^t|\mathbf{G}^t=g')$ is identical for any $g'$.
\end{theorem_copy}

The proof of Theorem~\ref{theorem:t2} follows~\cite{WangHK20}. Similar to Theorem~\ref{theorem:t1}'s proof, it first finds the optimum of $d_\mathbf{G}$, and then proves that the optimal $f$ and $\phi$ can balance the representations with respect to interference. We first restate the Lemma 4.1 in~\cite{WangHK20} in Proposition.~\ref{pro:2}.

\begin{proposition}\label{pro:2}
(Lemma 4.1 in~\cite{WangHK20}) Let $\mathbf{C}^t = [\mathbf{Z}^t,\mathbf{A}^t]$ be the concatenation of $\mathbf{Z}^t$ and $\mathbf{A}^t$. When fixing initial state encoder $f$ and ODE function $\phi$, the optimal $d_\mathbf{G}$ at time $t$ is:
\begin{equation}
    d_\mathbf{G}^* = \mathbb{E}_{\mathbf{G}^t\sim p(\mathbf{G}^t|\mathbf{C}^t)}(\mathbf{G}^t)\label{eqn:find_dg}.
\end{equation}
\end{proposition}
\begin{proof}
    Eq.~(\ref{eqn:find_dg}) is adapted from Eq.~(\ref{eqn:pred_g}). We disregard the expectation with respect to $i$ and $t$ since Eq.~(\ref{eqn:find_dg}) is applicable to any $i$ and $t$. If fixing $f$ and $\phi$, the optimal $d_\mathbf{G}^*$ is given by:
\begin{align}
    d_\mathbf{G}^* &=\underset{d_\mathbf{G}}{\text{argmin}} \ \mathbb{E}_{(\mathbf{Z}^t,\mathbf{A}^t,\mathbf{G}^t)\sim p(\mathbf{Z}^t,\mathbf{A}^t,\mathbf{G}^t)}\left(d_\mathbf{G}\left(r([\mathbf{Z}^t,\mathbf{A}^t])\right)-\mathbf{G}^t\right)^2\\
    &= \underset{d_\mathbf{G}}{\text{argmin}} \ \mathbb{E}_{(\mathbf{C}^t,\mathbf{G}^t)\sim p(\mathbf{C}^t,\mathbf{G}^t)}\left(d_\mathbf{G}\left(r(\mathbf{C}^t)\right)-\mathbf{G}^t\right)^2\\
    &= \underset{d_\mathbf{G}}{\text{argmin}} \ \mathbb{E}_{\mathbf{C}^t\sim p(\mathbf{C}^t)} \mathbb{E}_{\mathbf{G}^t\sim p(\mathbf{G}^t\mid \mathbf{C}^t)}\left(d_\mathbf{G}\left(r(\mathbf{C}^t)\right)-\mathbf{G}^t\right)^2.\label{eqn:optim_dg3}
\end{align}
As the quadratic expansion in~\cite{WangHK20}, the optimal interference predictor is $d_\mathbf{G}^* = \mathbb{E}_{\mathbf{G}^t\sim p(\mathbf{G}^t\mid \mathbf{C}^t)}(\mathbf{G}^t)$. 
\end{proof}

Here we introduce and reformulate Theorem 4.1 in~\cite{WangHK20} as Lemma.~\ref{lemma:1}. 

\begin{lemma}\label{lemma:1}
(Theorem 4.1 in~\cite{WangHK20}) Given $\mathbb{E}_{x}\mathbb{V}(y\mid x)$ where $\mathbb{V}$ denotes variance, its global optimum can be achieved if and only if for any $x$, $\mathbb{E}(y\mid x) = \mathbb{E}(y)$.
\end{lemma}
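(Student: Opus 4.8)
The plan is to obtain the statement directly from the \emph{law of total variance}. In the min-max game of Eq.~(\ref{eqn:pred_g}), Proposition~\ref{pro:2} already pins down the inner minimizer as $d_\mathbf{G}^{*}=\mathbb{E}(\mathbf{G}^t\mid\mathbf{C}^t)$; substituting it reduces the outer problem (over the encoder $f$ and ODE function $\phi$, whose gradients are reversed by $r(\cdot)$) to \emph{maximizing} $\mathbb{E}_{x}\mathbb{V}(y\mid x)$ with $x=\mathbf{C}^t=[\mathbf{Z}^t,\mathbf{A}^t]$ and $y=\mathbf{G}^t$. So ``global optimum'' here means the largest attainable value of $\mathbb{E}_{x}\mathbb{V}(y\mid x)$, and the task is to characterize exactly when it is reached.

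First I would write down the decomposition $\mathbb{V}(y)=\mathbb{E}_{x}\!\left[\mathbb{V}(y\mid x)\right]+\mathbb{V}_{x}\!\left[\mathbb{E}(y\mid x)\right]$, valid for any square-integrable $y$. The crucial observation is that $y=\mathbf{G}^t$ is an \emph{observed} quantity whose marginal law is fixed by the data and does not depend on $f$ or $\phi$; hence $\mathbb{V}(y)$ is a constant of the optimization, and maximizing $\mathbb{E}_{x}[\mathbb{V}(y\mid x)]$ is equivalent to minimizing the nonnegative term $\mathbb{V}_{x}[\mathbb{E}(y\mid x)]$. Since a variance is zero if and only if its argument is almost surely constant, and since $\mathbb{E}_{x}[\mathbb{E}(y\mid x)]=\mathbb{E}(y)$ by the tower rule, the term $\mathbb{V}_{x}[\mathbb{E}(y\mid x)]$ vanishes exactly when $\mathbb{E}(y\mid x)=\mathbb{E}(y)$ for (almost) every $x$ in the support. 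This yields both directions of the ``if and only if'': when $\mathbb{E}(y\mid x)\equiv\mathbb{E}(y)$ the objective equals $\mathbb{V}(y)$, its maximum; and conversely any configuration attaining $\mathbb{V}(y)$ must force $\mathbb{E}(y\mid x)\equiv\mathbb{E}(y)$. Instantiated with $x=\mathbf{C}^t$, $y=\mathbf{G}^t$, this is precisely the condition that Theorem~\ref{theorem:t2}'s proof then translates into ``$P(\mathbf{Z}^t,\mathbf{A}^t\mid\mathbf{G}^t=g')$ identical for all $g'$''.

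The argument is almost entirely bookkeeping around the variance identity, so I do not expect a serious obstacle; the two points that need care are (i) stating unambiguously that the relevant extremum is a \emph{maximum} over $(f,\phi)$ (coming from the gradient-reversal layer), so that ``global optimum'' is well defined, and (ii) the measure-theoretic step from ``$\mathbb{V}_{x}[\mathbb{E}(y\mid x)]=0$'' to ``$\mathbb{E}(y\mid x)=\mathbb{E}(y)$ for every $x$'', which needs a finite-second-moment assumption on $\mathbf{G}^t$ and an almost-everywhere qualifier (harmless here, since $d_\mathbf{G}$ only ever sees $x$ on its support). Step (ii) is the main, though minor, subtlety; everything else follows immediately from the decomposition and nonnegativity of variance.
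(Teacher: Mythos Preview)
Your argument via the law of total variance is correct and is the standard route to this fact. Note, however, that the paper does not actually prove Lemma~\ref{lemma:1}: it is imported verbatim from~\cite{WangHK20} (where the proof is precisely the total-variance decomposition you give) and used as a black box in the proof of Theorem~\ref{theorem:t2}, so there is no in-paper proof to compare against beyond the application you already outline.
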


With Proposition.~\ref{pro:2} and Lemma.~\ref{lemma:1}, we can prove Theorem.~\ref{theorem:t2}. 
\begin{proof}
Fixing the optimal $d_\mathbf{G}^*$ in Proposition.~\ref{pro:2}, the optimal $f$ and $\phi$ for objective Eq.~(\ref{eqn:optim_dg3}) is:
\begin{align}
    f^*,\phi^* &= \underset{f,\phi}{\text{argmax}}\ \mathbb{E}_{\mathbf{C}^t\sim p(\mathbf{C}^t)} \mathbb{E}_{\mathbf{G}^t\sim p(\mathbf{G}^t\mid \mathbf{C}^t)}\left(\mathbb{E}_{\mathbf{G}^t\sim p(\mathbf{G}^t|\mathbf{C}^t)}(\mathbf{G}^t)-\mathbf{G}^t\right)^2\\
    & = \underset{f,\phi}{\text{argmax}}\ \mathbb{E}_{\mathbf{C}^t\sim p(\mathbf{C}^t)}\mathbb{V}\left(\mathbf{G}^t\mid \mathbf{C}^t\right)\label{eqn:variance}.
\end{align}
Eq.~(\ref{eqn:variance}) has the same form as the equation in Lemma.~\ref{lemma:1}. Then substituting $x=\mathbf{C}^t$ and $y=\mathbf{G}^t$ in Lemma.~\ref{lemma:1}, we have $\mathbb{E}(\mathbf{G}^t\mid \mathbf{C}^t) = \mathbb{E}(\mathbf{G}^t)$. In other words, $\mathbf{G}^t \ind \mathbf{C}^t$. Therefore, we can also use the inverse form that $\mathbb{E}(\mathbf{C}^t) = \mathbb{E}(\mathbf{C}^t\mid \mathbf{G}^t) = \mathbb{E}([\mathbf{Z}^t,\mathbf{A}^t]\mid \mathbf{G}^t)$ for any $\mathbf{G}^t$, which concludes the proof of Theorem.~\ref{theorem:t2}.
\end{proof}

\subsection{Pseudo-Code of \model Training }\label{sec:pseudo}
The pseudo-code for training \model is shown in Algorithm.~\ref{algo:training}. Specifically, we use an alternative training trick to trade-off the outcome prediction and adversarial balancing. 

\begin{algorithm}[H]
\caption{The optimization process of \model}
\begin{algorithmic}
    \State \textbf{Input:} Multi-agaent dynamical system ${\mathcal{G}}$; the observational data $\left(\left( \mathbf{X}^t, \mathbf{A}^t, \mathbf{Y}^t \right) \cup \mathbf{V}\right)$; observed timestamps $\left(t_0,t_1\cdots t_T\right)$.
    \State \textbf{Output:} Trained initial state encoder $f(\cdot)$; \odelower function $\phi(\cdot)$; outcome decoder $d_\mathbf{Y}(\cdot)$; treatment predictor $d_\mathbf{A}(\cdot)$; interference predictor $d_\mathbf{G}(\cdot)$.
    \State \textbf{Training:}
    \State Initialize $f(\cdot)$, $\phi(\cdot)$, $d_\mathbf{Y}(\cdot)$, $d_\mathbf{A}(\cdot)$, $d_\mathbf{G}(\cdot)$;
\For {w = 1, 2, ..., W } \Comment{\com{Train W iterations}}
    \If {w\%(K+1) = 0}
    \Comment{\com{Train with $L^{\langle Y \rangle}$ for 1 steps}}
        \State Compute $L^{\langle Y \rangle}$;
        \State One step optimization for $f(\cdot)$, $\phi(\cdot)$, and $d_\mathbf{Y}(\cdot)$: 
        \State $\theta_{f}^{(w+1)} = \theta_{f}^{(w)}-\eta\nabla_{\theta_{f}} L^{\langle Y \rangle}$; \Comment{\com{$\eta$ is learning rate}}
        \State $\theta_{\phi}^{(w+1)} = \theta_{\phi}^{(w)}-\eta\nabla_{\theta_{\phi}} L{\langle Y \rangle}$; 
        \State $\theta_{d_\mathbf{Y}}^{(w+1)} = \theta_{d_\mathbf{Y}}^{(w)}-\eta\nabla_{\theta_{d_\mathbf{Y}}} L^{\langle Y \rangle}$; 
    \Else \Comment{\com{Train with $L$ for $K$ steps}}
        \State Compute $L$;
        \State One step optimization for $f(\cdot)$, $\phi(\cdot)$, $d_\mathbf{Y}(\cdot)$, $d_\mathbf{A}(\cdot)$, $d_\mathbf{G}(\cdot)$: 
        \State $\theta_{d_\mathbf{A}}^{(w+1)} = \theta_{d_\mathbf{A}}^{(w)}-\eta\nabla_{\theta_{d_\mathbf{A}}} L$; 
        \State $\theta_{d_\mathbf{G}}^{(w+1)} = \theta_{d_\mathbf{G}}^{(w)}-\eta\nabla_{\theta_{d_\mathbf{G}}} L$; 
        \State $\theta_{d_\mathbf{Y}}^{(w+1)} = \theta_{d_\mathbf{Y}}^{(w)}-\eta\nabla_{\theta_{d_\mathbf{Y}}} L$
        \State $\theta_{f}^{(w+1)} = \theta_{f}^{(w)}-\eta\nabla_{\theta_{f}} L^{\langle Y \rangle}+\eta\nabla_{\theta_{f}} L^{\langle A \rangle}+\eta\nabla_{\theta_{f}} L^{\langle G \rangle}$; 
        \State \Comment{\com{Gradient reversal}}
        \State $\theta_{\phi}^{(w+1)} = \theta_{\phi}^{(w)}-\eta\nabla_{\theta_{\phi}} L^{\langle Y \rangle}+\eta\nabla_{\theta_{\phi}} L^{\langle A \rangle}+\eta\nabla_{\theta_{\phi}} L^{\langle G \rangle}$; 
        \State \Comment{\com{Gradient reversal}}
\EndIf 
\EndFor
\State \textbf{Return} $f(\cdot)$, $\phi(\cdot)$, $d_\mathbf{Y}(\cdot)$, $d_\mathbf{A}(\cdot)$, $d_\mathbf{G}(\cdot)$.
\end{algorithmic}
\label{algo:training}
\end{algorithm}

\subsection{Experimental Settings}\label{sec:exp_setting}

\textbf{Datasets. } In observational data, we only have outcomes under one treatment trajectory but not the ground-truths of counterfactual outcomes. Therefore, we follow~\cite{ma2022learning,guo2020learning,veitch2019using} to use semi-synthetic data to evaluate \model. That is, the graph structure and node features are real, but treatments and potential outcomes are simulated. We use the social networks Flickr and BlogCatalog as in~\cite{guo2020learning,chu2021graph,ma2021deconfounding} as the graph $\mathcal{G}$. We follow these works to first encode the node features into low-dimensional embeddings ($10$-dimensional in this paper) via LDA~\cite{blei2003latent}. We then follow~\cite{jiang2022estimating} to use Metis~\cite{karypis1998fast} to split the graph into training/validation/testing sets. To simulate treatment and potential outcomes over time, ~\cite{geng2017prediction, BicaAJS20, SeedatIBQS22} use a longitudinal simulation environment, which, however, assumes the units are mutually independent. We extend it into our multi-agent dynamical systems setting by considering the neighbor confounders and interference. During the simulation, we are motivated by the vaccine's use case. Specifically, treatment $\mathbf{A}_i^t$ denotes getting a vaccine or not at time $t$ of unit $i$. The trajectory of $\mathbf{A}_i^t$ denotes the vaccine records over time, e.g., a unit may have a booster dose after the initial vaccine. In this case, the time-dependent covariates $\mathbf{X}^t$ could be the health condition, static covariates $\mathbf{V}$ could be race or educational background (assuming it does not change during the study) and potential outcome $\mathbf{Y}^t$ could be immunity to the virus. As discussed in Sec.~\ref{sec:problem}, the potential outcome $\mathbf{Y}^t$ is essentially a part of $\mathbf{X}^t$. This is a common setting in longitudinal causal inference studies~\cite{BicaAJS20,SeedatIBQS22,MelnychukFF22}. During the simulation, we follow this protocol: the health condition $\mathbf{X}^t_i$ has a value range $[0.1,10]$, in which a higher value means a better health condition. Meanwhile, a higher health condition means a lower probability to receive a vaccine (treatment).

\emph{Treatment simulation. } The treatment $\mathbf{A}_i^t$ is affected by a unit's own time-dependent covariates $\mathbf{X}_i^t$, static covariates $\mathbf{V}_i$ and those of their neighbors. Let $\mathbf{E}_i = w_a\mathbf{V}_i$ denote the effects of static confounders on treatments, where $w_a$ is a generated parameter representing this mechanism. The treatment is then simulated by Bernoulli generator with probability the $p_i^t (a)$ of unit $i$ at time $t$:

\begin{align}
	p_i^t (a) = \sigma \Biggl(&\underbrace{\gamma_a (\delta_a-\bar{\mathbf{X}}_i^t)}_{\text {time-dependent covariates}} + \underbrace{\gamma_n \Biggl(\delta_n-(\frac{1}{|N_i|}\sum_{j\in \mathcal{N}_i}\bar{\mathbf{X}}_j^t)\Biggr)}_{\text {Neighbor time-dependent covariates}} \nonumber \\
 &+\underbrace{\gamma_f\mathbf{E}i}_{\text {Static covariates}}  +\underbrace{\gamma_g(\frac{1}{|N_i|}\sum_{j\in \mathcal{N}_i}\mathbf{E}_j)}_{\text {Neighbor static covariates}}\Biggr),
\end{align}
where $\sigma(\cdot)$ is sigmoid function. $\gamma_a$, $\gamma_n$, $\gamma_f$ and $\gamma_g$ are degrees of time-dependent confounders, neighbor time-dependent confounders, static confounders and neighbor static confounders, respectively. The default values are $[\gamma_a, \gamma_n, \gamma_f, \gamma_g] = [10,3.3,10,3.3]$ ($\frac{\gamma_a}{\gamma_n}=\frac{\gamma_f}{\gamma_g}=3$), to mimic that a unit's own confounding factors should affect it more than neighbors. Note $\bar{\mathbf{X}}_i^t$ is the average time-dependent covariates until $t$. This reflects that past time-dependent covariates also affect the treatment. We set $\delta_a = \delta_n =5$ as adjustments. 

\emph{Potential outcome simulation. } We follow~\cite{geng2017prediction, BicaAJS20, SeedatIBQS22} to use a Pharmacokinetic-Pharmacodynamic (PK-PD) model~\cite{goutelle2008hill} to simulate the continuous trajectory. PK-PD model is a popular bio-mathematical model and a natural fit for our vaccine use case. As mentioned above, $\mathbf{Y}^t$ is essentially a part of $\mathbf{X}^t$. Therefore we directly simulate the trajectory of $\mathbf{X}_i^t$: 
\begin{align}
\frac{d\mathbf{X}^t_i}{dt}= \mathbf{X}^t_i\bigg(&\underbrace{\rho_u \log \left(\frac{K}{\mathbf{X}^t_i}\right)}_{\text {Time-dependt covariates}}+\underbrace{\rho_n \log \left(\frac{K}{\mathbf{X}^t_i}\right)}_{\text {Neighbor time-dependt covariates}}\nonumber \\
& + \underbrace{\rho_f\mathbf{O}_i}_{\text {Static covariates}} + \underbrace{\rho_g\sum_{j\in \mathcal{N}_i}\mathbf{O}_j)}_{\text {Neighbor static covariates}}\nonumber \\
&+\underbrace{\beta_{a} \mathbf{D}_i^t}_{\text {Treatment }}+\underbrace{\frac{1}{|N_i|}\sum_{j\in \mathcal{N}_i}\beta_{n} \mathbf{D}_j^t}_{\text {Interference}}+\underbrace{e_{i}^t}_{\text {Noise }} \bigg),
\label{eqn:pkpd}
\end{align}
where $K$ controls the effects of time-dependent covariates on future potential outcomes. $\mathbf{O}_i = w_x\mathbf{V}_i$ denote the effects of static confounders on potential outcomes, where $w_x$ represents this mechanism. $\rho_u$, $\rho_n$, $\rho_f$ and $\rho_g$ are degrees of time-dependent covariates, neighbor time-dependent covariates, static covariates and neighbor static covariates, respectively. Their default values are $[\rho_u, \rho_n, \rho_f, \rho_g] = [-0.001, -00033,0.001,0.00033]$ ($\frac{\rho_u}{\rho_n} = \frac{\rho_f}{\rho_g}=3$). $\beta_a$ and $\beta_n$ control the strengths of treatment and interference. We set them as $[\beta_a,\beta_n] = [0.03, 0.01]$. The values also reflect that a unit's own covariates and treatment should have stronger effects on its future potential outcomes than neighbors. In reality, the effects of vaccines on providing protection decrease over time. To mimic this phenomenon, we use the following decay function to model the effects of treatments over time as in~\cite{BicaAJS20,SeedatIBQS22}.
\begin{align}
    \mathbf{D}_i^t = \tilde{\mathbf{D}}_i^t + \mathbf{D}_i^{(t-1)}/2,
\end{align}
where $\mathbf{D}_i^t$ denote the protection effect at time $t$, and $\tilde{\mathbf{D}}_i^t$ means a full protection of vaccines given at $t$. In other words, the unit receives a vaccine at time $t$, i.e., $\mathbf{A}_i^t=1$. We set $\tilde{\mathbf{D}}_i^t=1$.

\subsection{Notation Table}\label{sec:notations}
The notations and their corresponding description are in Table.~\ref{tb:notation_table}.

\begin{table}[H]
\caption{Notations.}
\vspace{-10pt}
\setlength{\tabcolsep}{2.1pt}
\begin{tabular}{@{}ll@{}}
\toprule[1.1pt]
Notation & Description \\ \midrule
$\mathcal{G}$ & Graph that represents multi-agent dynamical system\\
$\mathcal{V}$ & Node set in $\mathcal{G}$\\
$\mathcal{E}$ & Edge set in $\mathcal{G}$\\
$\mathbf{V}$ & Static unit features\\
$\mathbf{X}^t$ & Time-dependent covariates at time $t$\\
$\mathbf{A}^t$ & Treatment at time $t$\\
$\mathbf{Y}^t$ & Observed outcomes at time $t$\\
$\mathbf{Y}^t(A^t=a)$ & Potential outcomes under treatment $a$\\
$\mathbf{\Bar{X}}^t$ & Time-dependent covariates collections up to $t$\\
$\mathbf{\Bar{A}}^t$ & Treatment collections up to $t$\\
$\mathbf{\Bar{Y}}^t$ & Observed outcomes collections up to $t$\\
$\mathcal{H}^t$ & Past observations\\
$\mathbf{A}^t_{\mathcal{N}_i}$ & Treatments of node $i$'s first-order neighbors\\
$\mathbf{A}^t_{\mathcal{N}_{-i}}$ & Treatments of nodes that are \\&beyond $i$'s first-order neighbors\\
$\mathbf{G}_i^t$ & Interference summary variable\\
$\mathbf{Z}_i^t$ & Continuous latent trajectory for node $i$\\
$\mathbf{C}_i^t$ & Concatenation of $\mathbf{Z}_i^t$ and $\mathbf{A}_i^t$\\
$g(\cdot)$ & Interference summary function\\
$\phi(\cdot)$ & ODE function\\
$f(\cdot)$ & Initial state encoder function\\
$r(\cdot)$ & Gradient reversal layer\\
$d_\mathbf{Y}(\cdot)$ & Outcome prediction layer\\
$d_\mathbf{A}(\cdot)$ & Treatment prediction layer\\
$d_\mathbf{G}(\cdot)$ & Interference prediction layer\\
$N$ & Number of nodes in $\mathcal{V}$\\
$T$ & Number of observed timestamps\\
$L^{\langle Y \rangle}$ & Loss function of outcome prediction\\
$L^{\langle A \rangle}$ & Loss function of treatment prediction\\
$L^{\langle G \rangle}$ & Loss function of interference prediction\\
$\alpha_{\mathbf{A}}$& Weight of treatment balancing\\
$\alpha_{\mathbf{G}}$& Weight of interference balancing\\
\bottomrule[1.1pt]
\end{tabular}
\label{tb:notation_table}
\end{table}
\end{document}